
\documentclass{article}
\pdfoutput=1

\usepackage{microtype}
\usepackage{graphicx}
\usepackage{subfigure}
\usepackage{booktabs} 
\usepackage{multirow}
\usepackage{multicol}
\usepackage{hhline}
\usepackage{makecell}
\usepackage{graphicx}
\usepackage{enumitem}
\usepackage{algorithmic}
\usepackage{algorithm}
\usepackage{amssymb} 
\usepackage{amsthm}
\newtheorem{assumption}{Assumption}
\usepackage{booktabs}
\usepackage{varwidth}
\usepackage{pifont}

\usepackage{fontawesome5}
\usepackage{xcolor}
\usepackage{subcaption}
\usepackage{svg}
\usepackage{xcolor}
\usepackage{amsmath,amssymb,amsfonts}
\definecolor{lightgray}{RGB}{220, 220, 220} 
\usepackage{colortbl}

\newcommand{\ours}{DISCRET}

\newcommand{\otherx}[1]{x^{*}_{#1}}
\newcommand{\othert}[1]{t^{*}_{#1}}
\newcommand{\others}[1]{s^{*}_{#1}}
\newcommand{\othery}[1]{y^{*}_{#1}}

\newcommand{\ihdp}{IHDP}
\newcommand{\ihdpc}{IHDP-C}
\newcommand{\tcga}{TCGA}
\newcommand{\news}{News}

\newcommand{\eeec}{EEEC}
\newcommand{\uganda}{Uganda}

\let\oldding\ding
\renewcommand{\ding}[2][1]{\scalebox{#1}{\oldding{#2}}}
\newcommand{\cmark}{\ding{51}} %
\newcommand{\xmark}{\ding{55}} %
 %


\usepackage{amsmath,amsfonts,bm}









\def\eqref#1{equation~\ref{#1}}









\def\1{\bm{1}}








\def\vw{{\bm{w}}}



\DeclareMathAlphabet{\mathsfit}{\encodingdefault}{\sfdefault}{m}{sl}
\SetMathAlphabet{\mathsfit}{bold}{\encodingdefault}{\sfdefault}{bx}{n}













\usepackage{hyperref}


\usepackage[accepted]{icml2024}


\usepackage{amsmath}
\usepackage{amssymb}
\usepackage{mathtools}
\usepackage{amsthm}

\usepackage[capitalize,noabbrev]{cleveref}

\theoremstyle{plain}
\newtheorem{theorem}{Theorem}[section]

\newtheorem{lemma}[theorem]{Lemma}

\theoremstyle{definition}

\theoremstyle{remark}

\usepackage[textsize=tiny]{todonotes}

\icmltitlerunning{\ours: Synthesizing Faithful Explanations For Treatment Effect Estimation}

\begin{document}

\twocolumn[
\icmltitle{\ours: Synthesizing Faithful Explanations For Treatment Effect Estimation}



\icmlsetsymbol{equal}{*}

\begin{icmlauthorlist}
\icmlauthor{Yinjun Wu}{equal,yyy}
\icmlauthor{Mayank Keoliya}{equal,comp}
\icmlauthor{Kan Chen}{sch}
\icmlauthor{Neelay Velingker}{comp}
\icmlauthor{Ziyang Li}{comp}
\icmlauthor{Emily J Getzen}{med}
\icmlauthor{Qi Long}{comp,med}
\icmlauthor{Mayur Naik}{comp}
\icmlauthor{Ravi B Parikh}{med}
\icmlauthor{Eric Wong}{comp}
\end{icmlauthorlist}

\icmlaffiliation{yyy}{School of Computer Science, Peking University, Beijing, China}
\icmlaffiliation{comp}{Department of Computer and Information Science, University of Pennsylvania, Philadelphia, PA, United States}
\icmlaffiliation{sch}{School of Public Health, Harvard University, Boston, MA, United States}
\icmlaffiliation{med}{Perelman School of Medicine, University of Pennsylvania, Philadelphia, PA, United States}

\icmlcorrespondingauthor{Yinjun Wu}{wuyinjun@pku.edu.cn}
\icmlcorrespondingauthor{Mayank Keoliya}{mkeoliya@seas.upenn.edu}

\icmlkeywords{Machine Learning, ICML}

\vskip 0.3in
]




\printAffiliationsAndNotice{\icmlEqualContribution}

\begin{abstract}
Designing faithful yet accurate AI models is challenging, particularly in the field of individual treatment effect estimation (ITE).
ITE prediction models deployed in critical settings such as healthcare should ideally be (i) accurate, and (ii)  provide faithful explanations.  However, current solutions are inadequate: state-of-the-art black-box models do not supply explanations, post-hoc explainers for black-box models lack faithfulness guarantees, and self-interpretable models greatly compromise accuracy. To address these issues, we propose \ours, a self-interpretable ITE framework that synthesizes faithful, rule-based explanations for each sample. A key insight behind \ours\ is that explanations can serve dually as \textit{database queries} to identify similar subgroups of samples. We provide a novel RL algorithm to efficiently synthesize these explanations from a large search space.
We evaluate \ours\ on diverse tasks involving tabular, image, and text data. \ours\ outperforms the best self-interpretable models and has accuracy comparable to the best black-box models while providing faithful explanations. \ours\ is available at \url{https://github.com/wuyinjun-1993/DISCRET-ICML2024}.

\end{abstract}

\vspace{-0.2in}
\section{Introduction}\label{sec: intro}

Designing accurate and explainable AI models is a key challenge in solving a wide range of problems that require individualized explanations. In this paper, we tackle this challenge in the context of individual treatment effect (ITE) estimation. ITE quantifies the difference between one individual’s outcomes with and without receiving treatment. 
Estimating ITE is a significant problem not only in healthcare \citep{basu2011estimating} but also in other domains such as linguistics \citep{pryzant2021causal, feder2021causalm} and poverty alleviation \citep{JJD-Confounding, JJD-Heterogeneity}. 
A large body of literature has investigated accurately estimating ITE using various machine learning architectures, including GANs \citep{yoon2018ganite} and transformers \citep{zhang2022exploring}, among others \citep{shalit2017estimating, liu2022causalegm}. 



ITE prediction models deployed in critical settings
should ideally be (i) \textbf{accurate}, and (ii) provide \textbf{faithful explanations} in order to be trustable and usable. In this paper, we follow prior work on evaluating the faithfulness of explanations in terms of \textit{consistency}, which measures the degree to which samples with similar explanations have similar model predictions \citep{dasgupta2022framework, nauta2023explainableai}. 

Current solutions for predicting ITE are either accurate or faithful, but not both, as illustrated in the first two rows of Figure \ref{fig: motivating_example}. While self-interpretable models such as Causal Forest and others ~\citep{athey2019estimating,chen2023covariate} produce consistent explanations, they struggle to provide sufficiently accurate ITE estimations. On the other hand, while black-box models like transformers are typically the most accurate, explanations generated by post-hoc explainers, such as Anchor \citep{ribeiro2018anchors}, are not provably consistent.

\begin{figure*}
    \centering
    \includegraphics[width=\textwidth]{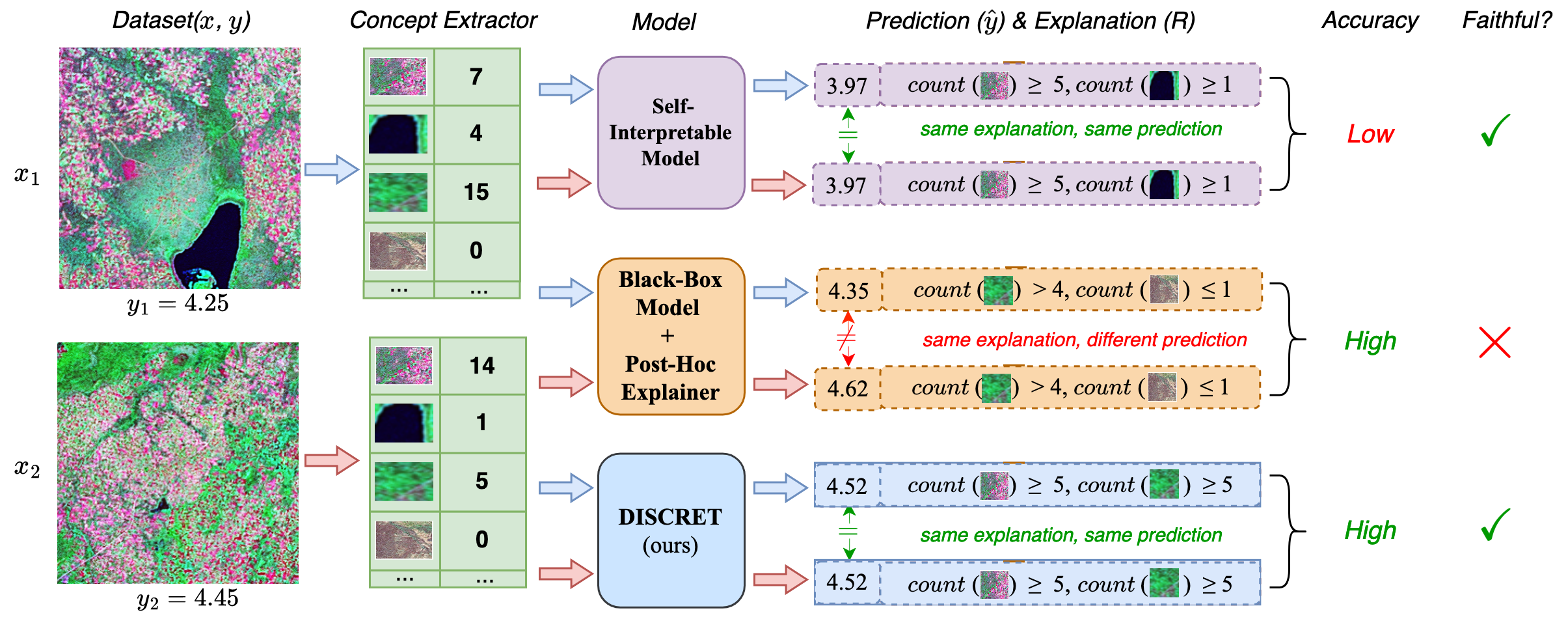}
    \vspace{-0.2in}
    \caption{Motivating examples from the Uganda dataset. We predict how providing economic aid (the treatment) helps to develop remote regions of the country (the outcome) via satellite images. The task is to estimate the ITE for each sample $x_1$ and $x_2$. \ours\ predicts that, because both images have several indicators of rich soil and urbanization, they will have similar ITE if given aid. Self-interpretable models such as Causal Forest \cite{athey2019estimating} produce \textit{consistent} ITE estimates (i.e., samples with same explanations have same model predictions, \textit{viz.} 3.97 and 3.97), but have poor accuracy ($\hat{ITE_{x_1}} \ll ITE_{x_1} = 4.25$). Black-box models such as TransTEE \cite{zhang2022exploring}, are accurate but do not produce similar predictions for samples $x_1$ and $x_2$ with similar explanations, when the explanations are sourced from post-hoc explainers such as Anchor \cite{ribeiro2018anchors}. \ours\ produces both consistent and accurate predictions.}
    

    \label{fig: motivating_example}
\end{figure*}

We therefore seek to answer the following central question:
    \emph{Is it possible to design a faithfully explainable yet accurate learning algorithm for treatment effect estimation?}
To this end, we propose \ours\footnote{\underline{DIS}covering \underline{C}omparable items with \underline{R}ules to \underline{E}xplain \underline{T}reatment Effect}, the first provably-faithful, deep learning based ITE prediction framework.
Given a sample $x$, \ours\ follows prior work and estimates ITE by computing the average treatment effect (ATE) of samples that are similar to $x$. However, in contrast to prior methods that discover similar samples through statistical matching \citep{anderson1980stratification, chen2023testing} or clustering \citep{xue2023assisting}, \ours\ finds similar samples by (i) synthesizing a logical rule that describes the key features of sample $x$ (and hence \textit{explains} the subgroup the sample belongs to) and then (ii)  
evaluating this rule-based explanation on a database of training samples (see Figure \ref{fig:alg_pipeline} for our pipeline).  
As shown in Figure \ref{fig: motivating_example}, \ours\ produces consistent explanations for samples with similar predictions; in fact, it is guaranteed to be consistent by construction, as we show later.


How does \ours\ synthesize rules which correctly group similar samples, and thus lead to accurate predictions? Learning to synthesize rules is challenging since the execution of database queries is non-differentiable and thus we cannot compute an end-to-end loss easily. 
To address this issue, we design a deep reinforcement learning algorithm with a novel and tailored reward function for dynamic rule learning. We also state the theoretical results of the convergence of \ours\ under some mild conditions suggesting if the ground-truth explanations are consistent, then our training algorithm can always discover them.


Due to the widely recognized trade-offs between interpretability and prediction performance~\citep{dziugaite2020enforcing}, \ours\ slightly underperforms the state-of-the-art black-box models \citep{zhang2022exploring}.  In addressing this, we found that regularizing the training loss of black-box models such as TransTEE to penalize discrepancy with \ours\ predictions yields new state-of-the-art models.

We evaluate the capabilities of \ours\ through comprehensive experiments spanning four tabular, one image, and one text dataset, covering three different types of treatment variables. For tabular data, among others, we use the \ihdp\ dataset \citep{hill2011bayesian} which tracks cognitive outcomes of premature infants. Other datasets used are \tcga\ (tabular) \citep{weinstein2013cancer}, \ihdpc\ (tabular), Uganda satellite images for estimating poverty intervention (image), and the Enriched Equity Evaluation Corpus (text). Notably, our approach outperforms all self-interpretable methods, including by 34\% on IHDP, is comparable to the accuracy of black-box models, and produces more faithful explanations than post-hoc explainers. In addition, regularizing the state-of-the-art black-box models with \ours\ reduces their ITE prediction error across tasks, including by 18\% on TCGA.


Our contributions can be summarized as follows:
\begin{enumerate}
    \item 
    We introduce \ours, a self-interpretable framework that synthesizes faithful rule-based explanations, and apply it to the treatment effect estimation problem.

    \item 
    We present a novel Deep Q-learning algorithm to 
    automatically learn these rule-based explanations, and supplement it with theoretical results.
    \item 
    We conduct an extensive empirical evaluation that demonstrates
    that \ours\ outperforms existing self-interpretable models and is comparable to black-box models across tabular, image, and text datasets spanning a diverse range of treatment variable types. Moreover, regularizing the state-of-the-art black-box models with \ours\ further reduces their prediction error.

\end{enumerate}
\vspace{-0.05in}
\section{Preliminaries}\label{sec: prelim}

\begin{figure*}[t]
  \centering
  \includegraphics[width=\textwidth]{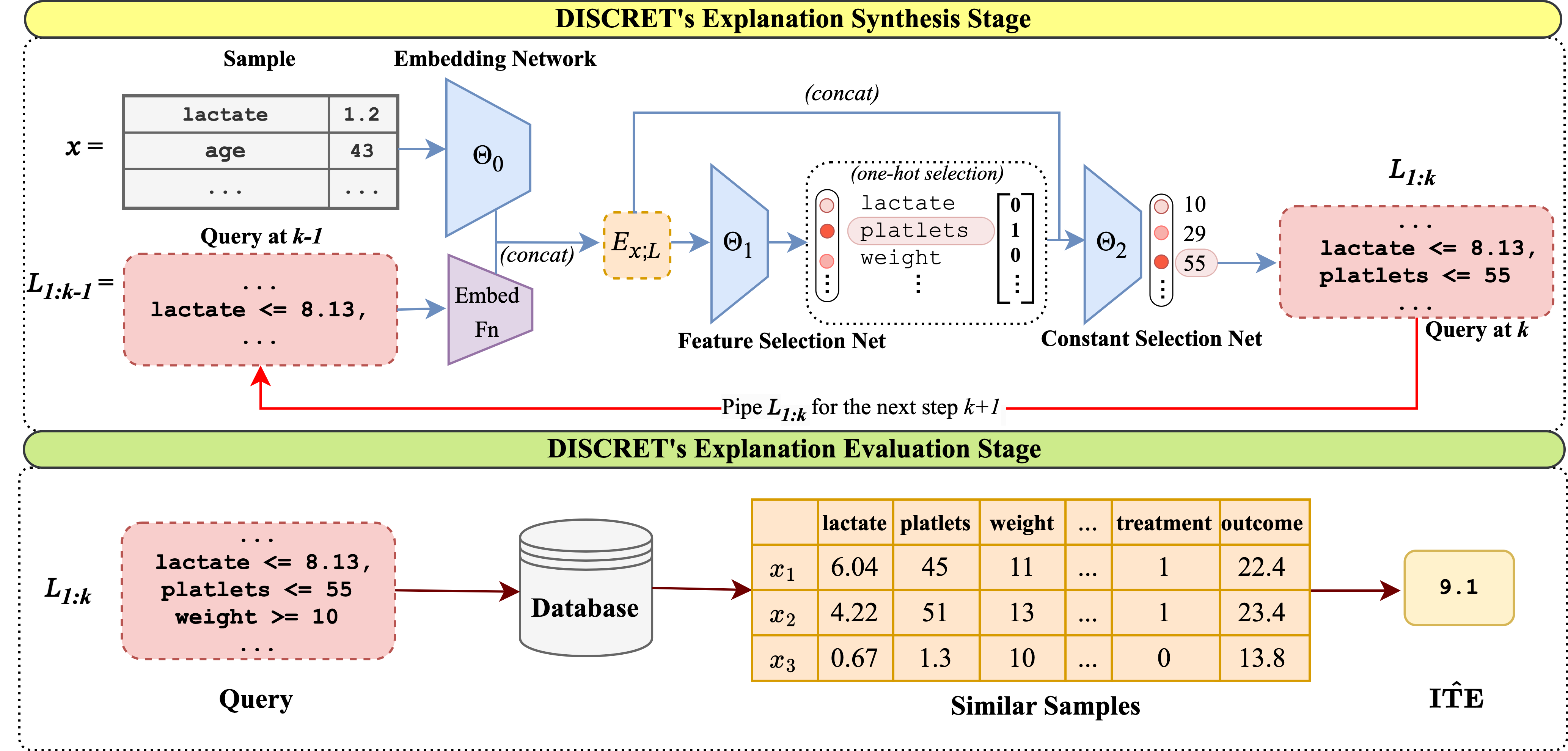} 

  \caption{Illustration of \ours\ on the IHDP dataset, which tracks premature infants. Given a sample $x$, \ours\  synthesizing an explanation $L_{1:k}$ where it iteratively constructs each literal in the explanation. In particular, \ours\ (i) embeds the given sample and any previously generated literals ($\Theta_0$),  (ii) passes the embedding to the feature selection network ($\Theta_1$) to pick a feature, and then (iii) passes the embedding and selected feature to the constant selection network ($\Theta_2$) to get a thresholding constant. The operator is auto-assigned based on the feature and sample. \ours\ executes this explanation on the database to find relevant samples, which are used (i) during training to compute a reward function for $\Theta_0, \Theta_1$ and $\Theta_2$, and (ii) during testing to calculate the ITE.}
  
  \label{fig:alg_pipeline}
  \vspace{-0.05in}
\end{figure*}


\subsection{Individual Treatment Effect (ITE) Estimation}
Suppose each sample consists of (i) the pre-treatment covariate variable $X$, (ii) the treatment variable $T$, (iii) a dose variable $S$ associated with $T$, and (iv) observed outcome $Y$ under treatment $T$ and dose $S$. We embrace a versatile framework throughout this study, where $T$ can take on either discrete or continuous values, $S$ is inherently continuous but can be either present or absent, $Y$ can be discrete or continuous, and $X$ may incorporate structured features as well as unstructured features, such as text or image data. In the rest of the paper, we primarily explore a broadly studied setting where $Y$ is a continuous variable, $T$ is a binary variable ($T=1$ and $T=0$ represent treated and untreated respectively) and there is no dose variable. The goal is to estimate individual treatment effects (ITE), i.e., the difference of outcomes with $T=1$ and $T=0$. Typically, the average treatment effect ($\text{ATE}$), the average of $\text{ITE}$ across all samples (i.e., $\text{ATE} = \mathbb{E}[\text{ITE}]$) is reported. Generalizations to other settings are provided in Appendix \ref{sec: general_treatment_var}. 

Beyond the treatment effect definitions, the propensity score, represented as the probability of treatment assignment $T$ conditioned on the observed covariates $X$, often plays a pivotal role in regularizing the treatment effect estimation. This propensity score is denoted as $\pi(T| X)$.

Unlike conventional prediction tasks, 
we are unable to directly observe the counterfactual outcomes during training, rendering the ground-truth treatment effect typically unavailable. To address this challenge and ensure the causal interpretability of our estimated treatment effect, we adhere to the standard assumptions proposed by \citet{rubin1974estimating}, which are formulated in Appendix \ref{sec:te_assumption}.



\vspace{-0.05in}
\subsection{Syntax of Logic Rules}
We assume that the covariate variable $X$ is composed of $m$ features, $X_1,X_2,\dots,X_m$, which can be categorical or numeric attributes from tabular data or pre-processed features extracted from text data or image data. We then build logic rule-based explanations upon those features to construct our treatment effect estimator. 
Those logic rules are assumed to be in the form of $K$ disjunctions of multiple conjunctions, i.e., $R_1 \lor R_2 \lor \cdots \lor R_H$ where each $R_i$ is a conjunction of $K$ literals: $l_{i1} \land l_{i2} \land l_{i3} \land \cdots \land l_{iK} $.
Each $l_{ij} (j=1,2,\dots)$ represents a literal of the form $l_{ij} = (A\ {op}\ c)$, where $A \in \{X_1,X_2,...,X_m\}$;
${op}$ is equality or inequality for categorical attributes, and ${op} \in \{<, >, =\} $ for numeric attributes; and $c$ is a constant.







\vspace{-0.05in}
\section{The \ours\ Framework}

Given a database $\mathcal{D}$ of individual samples with their covariate variables, and their ground-truth outcomes under treatment $T$ and dose $S$, 
we want to estimate the treatment effect on a new sample $x$. To do so, \ours\ consists of a two-step process: (i) \textit{explanation synthesis} where a rule-based explanation $R_x$ is synthesized for the given sample $x$, such that $R_x$ captures pertinent characteristics about the sample, and then (ii) \textit{explanation evaluation}, where a subgroup of similar samples $R_x(\mathcal{D}) \subseteq \mathcal{D}$ satisfying the explanation is selected from $\mathcal{D}$. Finally, the predicted ITE is computed over this subgroup $R_x(\mathcal{D})$. 

This section first outlines these two steps of \ours\ (\S \ref{sec: expl_synthesis} and \S \ref{sec: expl_evaluation}, Fig. \ref{fig:alg_pipeline}). We then explain the training algorithm (\S \ref{sec: training}). Additionally, we show how \ours\ can be employed to regularize state-of-the-art deep learning models for maximal performance (\S \ref{sec: regularization}). 

\vspace{-0.05in}
\subsection{Explanation Synthesis}
\label{sec: expl_synthesis}

\subsubsection{Overview}\label{sec: overview}
\ours 's explanation synthesizer consists of a set of three models, $\Theta=\{\Theta_0$, $\Theta_1$, $\Theta_2\}$. $\Theta_0$ is a backbone model for encoding features, $\Theta_1$ is a feature-selector, and $\Theta_2$ a thresholding constant selector for features. 
Note that $\Theta_0$ can be any encoding model, such as the encoder of the TransTEE model \cite{zhang2022exploring}. $\Theta_0$ can be optionally initialized with a pre-trained phase (see Appendix \ref{sec: pre-training phase}) and can be frozen or fine-tuned during the training phase.

Gven a sample $x$, and models $\Theta_0$, $\Theta_1$ and $\Theta_2$, we want to synthesize a conjunctive rule $R_x$ which takes the form of $R_x\ \text{:-}\ l_{1} \land l_{2} \land l_{3} \land \cdots \land l_{K}$. We synthesize $R_x$ by generating 
$l_{k}, k=1,2,\cdots,K$ recursively, where each $l_k$ takes the form $(A\ {op}\ c)$. Specifically, for each $l_k=A\ {op}\ c$, we select a feature $A$ using $\Theta_1$, a thresholding constant $c$ using $\Theta_2$, and an operator ${op}$ based on $x$, $A$ and $c$.
Before illustrating how to synthesize these rules during the inference phase in \S \ref{sec: rule_generation}, we take a light detour to describe some desired properties for them in \S \ref{sec: desired}.

\subsubsection{Desired Properties of Explanations}\label{sec: desired}

We state four desired properties of a rule-based explanation, which guide the design of \ours. We will refer to these properties in \S \ref{sec: rule_generation} and \S \ref{sec: training}.
\begin{enumerate}[leftmargin=*,nolistsep, nosep]
    \item \textbf{Local interpretability}: We aim to synthesize a rule-based explanation $R_x$ for \textit{each} individual sample $x$ rather than for a population of samples. Thus, explanations may differ for different samples. 
    \item \textbf{Satisfiability}: For any rule $R_x$ generated for a given sample $x$, $x$'s features must satisfy $R_x$. This guarantees that the sample $x$ and any samples retrieved by $R_x$ share the same characteristics.
    \item \textbf{Low-bias}: We expect that $R_x$ can retrieve a set of similar samples so that the bias between the estimated ATE over them and the ground-truth ITE is as small as possible.
    \item {\bf Non-emptiness}: There should be at least one sample from the database whose covariates satisfy $R_x$. In addition, for those samples satisfying $R_x$, their treatment variables should cover all essential treatment values for treatment effect estimations, e.g., containing both treated and untreated units in binary treatment settings. 
\end{enumerate}

\subsubsection{Rule Generation}\label{sec: rule_generation}
The generation of the rule $R_x$ during inference is straightforward. At each round $k$, we encode the features $E_x$ and the so-far generated rule $L_{1:k-1}(=l_{1} \land l_{2} \land l_{3} \land \cdots \land l_{k-1})$ and select a feature $A_k$ from $\Theta_1$ by (see Appendix \ref{sec: encoding_rules} for details). For each feature $A_k$, we select a thresholding constant $c$ and operator ${op}$ to form literal $l_k$. Selection of  $c$ and ${op}$ depends on the type of $A_k$.

\textbf{Categorical Features.} If $A$ is a categorical attribute, then we assign $c = x[A]$, where $x[A]$ is the value of attribute A in sample $x$; and we assign ${op}$ as $=$, which guarantees the \textbf{satisfiability} of $R_x$ on $x$. 

\textbf{Numeric Features.} If $A$ is a numeric attribute, we first discretize the range of $A$ into bins, and query $\Theta_2$ to choose a bin $C_j$. As suggested in Figure \ref{fig:alg_pipeline},  $\Theta_2$ takes the encoding of the covariates and $L_{1:k-1}$, and the one-hot encoding of feature $A$ as the model input. After the feature $A$ and the constant $c$ are identified, the operator ${op}$ is then deterministically chosen by comparing the value $x[A]$ and $c$. If $x[A]$ is greater than $c$, then ${op}$ is assigned as $\geq$, and as $\leq$ otherwise, thus again guaranteeing the {\bf satisfiability} of the rule $R_x$. 

In addition, we observe that the samples retrieved by the rule $R_x$ may not contain all essential treatment values for treatment effect estimations, thus violating the {\bf Non-emptiness}. To address this issue, we keep track of the retrieved samples for each $L_{1:k}(k=1,2,\dots,K)$ and whenever the addition of one literal $l_{k+1}$ leads to the violation of the {\bf Non-emptiness} property, we stop the rule generation process early and return $L_{1:k}$ as $R_x$. 

To produce multiple disjunctions with \ours, multiple literals are generated simultaneously at each round, each of which is assigned to one disjunction respectively (see Appendix \ref{sec: gen_disjunctive}).

\subsection{Explanation Evaluation}
\label{sec: expl_evaluation}

As Figure \ref{fig:alg_pipeline} shows, given a sample $x$ (e.g., a patient) with $(X,T,S,Y)$, and a rule $R_x$ (i.e., $L_{1:k}$ in Figure \ref{fig:alg_pipeline}), we evaluate the rule $R_x$ on a database $\mathcal{D}$ to retrieve a subgroup of similar samples, which is denoted by $R_x(\mathcal{D}) = \{(\otherx{i}, \othert{i}, \others{i}, \othery{i})\}_{i=1}^n$. 

\textbf{ITE Estimation.} The ITE of the sample $x$ is then estimated by computing the average treatment effect (ATE) estimated within this subgroup. In this paper, we take the empirical mean by default for estimating ATE of $R_x(\mathcal{D})$, i.e., $\hat{y}(1) - \hat{y}(0)$, in which $\widehat{y}(t), (t=0,1)$ denotes the estimated outcome calculated with the following formula:

\vspace{-2em}
\begin{small}
\begin{align}
    \begin{split}
        \widehat{y}(t) = \frac{1}{\sum \mathbb{I}({\othert{i}=t})}\sum\mathbb{I}({\othert{i}=t})\cdot \othery{i}
    \end{split}
\end{align} 
\end{small}
\vspace{-1.3em}

We also estimate the propensity score for discrete treatment variables by simply calculating the frequency of every treatment within $R_x(\mathcal{D})$: $\widehat{\pi}(T=t| X=x) = \sum\mathbb{I}(\othert{i}=t)/{|R_x(\mathcal{D})|}$.

\subsection{RL-based Training}\label{sec: training}
We train $\Theta$ to satisfy the desired properties mentioned in \S \ref{sec: desired}. In particular, to preserve the {\bf low-bias} property, we need to guide the generation of rules such that the estimated ITE is as accurate as possible. However, a key difficulty in training $\Theta$ is the \textbf{non-differentiability} arising from the explanation evaluation step (\S \ref{sec: expl_evaluation}), i.e. evaluating $R_x$ on our database. We overcome this issue by formulating the model training as a deep reinforcement learning (RL) problem and propose to adapt the Deep Q-learning (DQL) algorithm to solve this problem. Briefly, we define a reward function over the selected subgroup of samples $R_x(\mathcal{D})$, and use it to learn the RL-policy.

We first map the notations from \S \ref{sec: overview} to classical RL terminology. 
An RL agent takes one {\it action} at one {\it state}, and collects a {\it reward} from the environment, which is then transitioned to a new state. 
In our rule learning setting, a {\it state} is composed of the covariates $x$ and the generated literals in the first $k-1$ rounds, $L_{1:k-1}$. 
With $x$ and $L_{1:k-1}$, the model $\Theta_1$ and $\Theta_2$ collectively determine the $k_{th}$ literal, $l_k$, which is regarded as one {\it action}. 
Our goal is then to learn a policy parameterized by $\Theta$, which models the probability distribution of all possible $l_k$ conditioned on the state $(x, L_{1:k-1})$, such that the value function calculated over all $K$ rounds is maximized:

\vspace{-1.9em}
\begin{small}
\begin{align}\label{eq: value_function}
    \begin{split}
        V_{1:K} = \sum\nolimits_{k=1}^K r_k \gamma^{k-1},
    \end{split}
\end{align} \par
\end{small}
\vspace{-1.3em}

in which $\gamma$ is a discounting factor. 
Note that 
there are only $K$ horizons/rounds in our settings since the number of conjunctions in the generated rules is limited. To bias rule generation towards accurate estimation of ITE, we expect that the value function $V_{1:K}$ reflects how small the ITE estimation error is. However, since the counterfactual outcomes are not observed in the training phase, we therefore use the errors of the observed outcomes as a surrogate of the ITE estimation error. Also, we give a zero reward to the case where the retrieved subgroup, $L_{1:K}(\mathcal{D})$, violates the {\bf non-emptiness} property.
As a result, $V_{1:K}$ is formulated as

\vspace{-1.9em}
\begin{small}
    \begin{align}\label{eq: reward_function}
        \begin{split}
        V_{1:K} = e^{-\alpha(y - \widehat{y}_{1:K})^2}\cdot \mathbb{I}( L_{1:K}(\mathcal{D})\text{ is non-empty}),            
        \end{split}
    \end{align} \par 
\end{small}

in which $\widehat{y}_{1:K}$ represents the estimated outcome by using the generated rule composed of literals $L_{1:K}$ and $\alpha$ is a hyper-parameter. 
As a consequence, the reward collected at the $k_{th}$ round of generating $l_k$ becomes $r_k=(V_{1:k} - V_{1:k-1})/\gamma^{k-1}$. We further discuss how to automatically fine-tune the hyper-parameter $\alpha$ and incorporate the propensity score defined in \S \ref{sec: expl_evaluation} for regularization in Appendix \ref{sec: reward_func_additional}. 
\par

Next, to maximize the value function $V_{1:K}$, we employ Deep Q-learning (DQL) \citep{mnih2013playing} to learn the parameter $\Theta$. To facilitate Q learning, we estimate the Q value 
with the output logits of the models given a state $(x, L_{1:k-1})$ and an action $l_k$.  Recall that since \ours\ can generate consistent explanations by design, we can show that if $\Theta_0$ is an identity mapping and $\Theta_1$ is a one-layer neural network, the following theorem holds: 

\begin{theorem} \label{thm: main}
Suppose we have input data $\{(x_i, t_i, s_i, y_i)\}_{i=1}^N$ where $x_i \in \mathbb{R}^m$ and discrete, $t_i \in \mathbb{R}, s_i \in \mathbb{R}$, and $y_i \in \mathbb{R}$, then the $\hat{ITE}_x$ obtained from \ours\ converges to zero generalization error with probability 1 for ITE estimation (i.e. $(ITE_x - \hat{ITE}_x)^2 \rightarrow 0$ w.p. 1) for any fixed $K \leq m$ over the dataset with all discrete features under the data generating process $y = f (\mathcal{X}_K ) + c \cdot t + \epsilon$, where $\mathcal{X}_K \subseteq \{X_1, X_2, \cdots, X_m \}, c \in \mathbb{R}, t$ is the treatment assignment, and $\epsilon \sim \mathcal{N}(0, \sigma^2)$ for some $\sigma > 0$. 
\end{theorem}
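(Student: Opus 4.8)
The plan is to reduce the generalization-error claim to two ingredients: (i) a structural fact that, for discrete covariates, DISCRET's estimator is an exact stratified difference-of-means once the synthesized rule pins down the features on which $f$ depends, and (ii) an almost-sure law-of-large-numbers argument showing the residual noise averages out. I would first identify the target quantity. Under the data-generating process $y = f(\mathcal{X}_K) + c\cdot t + \epsilon$, the two potential outcomes are $y(1) = f(\mathcal{X}_K) + c + \epsilon$ and $y(0) = f(\mathcal{X}_K) + \epsilon$, so for every $x$ we have $ITE_x = c$. The theorem therefore amounts to proving $\hat{ITE}_x \to c$ almost surely as $N \to \infty$.

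Next I would exploit the discreteness of the covariates together with the \textbf{satisfiability} property. Because every feature is discrete, each literal generated by DISCRET is an equality $A = x[A]$, so the retrieved subgroup $R_x(\mathcal{D})$ is exactly the set of training samples agreeing with $x$ on the $K$ selected features. If the selected feature set contains $\mathcal{X}_K$ (possible since $|\mathcal{X}_K| \le K$), then every sample in $R_x(\mathcal{D})$ shares the common value $f(\mathcal{X}_K) = f_0$. Substituting the process into the estimator gives $\hat{y}(t) = f_0 + c\cdot t + \frac{1}{n_t}\sum_{i:\,t_i=t}\epsilon_i$, where $n_t$ counts subgroup samples with treatment $t$, and hence $\hat{ITE}_x = \hat{y}(1) - \hat{y}(0) = c + (\bar\epsilon_1 - \bar\epsilon_0)$.

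The convergence then follows from the strong law of large numbers. Since there are finitely many discrete strata and $N \to \infty$, under the standard overlap/positivity assumption (each stratum receives both treatments with positive probability, consistent with the \textbf{non-emptiness} property) the within-stratum counts $n_1$ and $n_0$ grow without bound. As the $\epsilon_i$ are i.i.d.\ mean-zero Gaussian, $\bar\epsilon_1 \to 0$ and $\bar\epsilon_0 \to 0$ almost surely, so $\hat{ITE}_x \to c = ITE_x$ w.p.\ $1$, giving $(ITE_x - \hat{ITE}_x)^2 \to 0$ w.p.\ $1$.

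The main obstacle — and the step I would treat most carefully — is establishing that RL training actually drives the feature selector $\Theta_1$ to a rule whose selected features contain $\mathcal{X}_K$. I would argue in two moves. First, a reward-optimality argument: the reward $V_{1:K} = e^{-\alpha(y-\hat{y}_{1:K})^2}\cdot \mathbb{I}(L_{1:K}(\mathcal{D})\text{ non-empty})$ is, in expectation, maximized exactly by rules that stratify on a superset of $\mathcal{X}_K$, since only then is the within-subgroup outcome variance reduced to the irreducible level $\sigma^2$; any rule omitting a relevant feature leaves residual heterogeneity in $f$ that contributes an extra nonnegative term $(f_0 - \bar f)^2$ to $\mathbb{E}[(y-\hat{y}_{1:K})^2]$ and hence strictly lowers the expected reward whenever $f$ genuinely varies over the omitted coordinate. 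Second, I would invoke convergence of Deep Q-learning to the optimal policy: with $\Theta_0$ the identity and $\Theta_1$ a single linear layer, the Q-function over the finite discrete state--action space is exactly representable, so under sufficient exploration the tabular Q-learning convergence guarantee applies and the greedy policy selects a reward-optimal, hence $\mathcal{X}_K$-covering, rule. Coupling this policy-convergence guarantee with the noise-averaging argument yields the claim; the delicate part is making the reward gap for incomplete feature sets quantitatively bounded away from zero so that the learned greedy action provably lands on a sufficient feature set rather than a near-optimal but heterogeneity-retaining one.
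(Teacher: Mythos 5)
Your proposal follows the same overall strategy as the paper's proof---appeal to the classical almost-sure convergence of Q-learning on a finite MDP (the paper invokes the stochastic-approximation result of Jaakkola et al.\ to get $Q \to Q^*$), then argue that the optimal policy stratifies on the relevant discrete features---but you execute the final step differently, and more carefully. The paper reduces the theorem to a lemma asserting that the optimal $\hat y$ attains \emph{zero} mean-square error on the outcome, justified only by the one-line remark that ``the optimal feature being selected in each step leads to a zero mean square error and other features lead to non-zero mean square error.'' Taken literally this cannot hold in the presence of the Gaussian noise $\epsilon$, and the paper never addresses how the noise disappears. Your route repairs exactly this: you observe that $ITE_x = c$ identically under the stated data-generating process, that discreteness plus satisfiability makes $R_x(\mathcal{D})$ an exact stratum on which $f(\mathcal{X}_K)$ is constant once the rule covers $\mathcal{X}_K$, and that $\hat{ITE}_x = c + \bar\epsilon_1 - \bar\epsilon_0 \to c$ by the strong law of large numbers as the within-stratum counts grow. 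This is the argument the paper needs but does not write down, and it is the correct way to make the ITE claim (as opposed to the outcome-MSE claim) come out true. You also correctly flag the one point that neither you nor the paper fully nails down: showing that the expected reward of any rule omitting a coordinate of $\mathcal{X}_K$ is bounded away from the optimum by a strictly positive gap (which requires $f$ to genuinely vary over the omitted coordinate and the hyperparameter $\alpha$ to be such that the exponential reward separates the two cases), so that the greedy policy recovered from $Q^*$ provably lands on an $\mathcal{X}_K$-covering rule. The paper simply asserts this separation; making it quantitative is the real remaining work in either version of the proof.
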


Intuitively, Theorem \ref{thm: main} suggests if the ground-truth explanations are consistent, then our training algorithm can perfectly discover them. We prove the theorem and explain our algorithm in detail in Appendix \ref{sec: addition_tech}.

\subsection{Regularizing Black-box Models with \ours}\label{sec: regularization}
Due to the widely recognized trade-offs between model interpretability and model performance \citep{dziugaite2020enforcing}, self-interpretable models typically suffer from poorer performance than their neural network counterparts. To achieve a better balance between performance and interpretability, we further propose to regularize the prediction of black-box models with that of \ours. Since \ours\ also leverages part of the black-box model such as the encoder of TransTEE as the backbone $\Theta_0$, we thus obtain the predictions of black-box models by reusing $\Theta_0$. 
Specifically, starting from the encoded covariates $E_x$ generated by $\Theta_0$, 
we predict another outcome $\hat{y'}$ directly with $E_x$ adhering to the mechanism employed by state-of-the-art neural models. This prediction is then regularized by the predicted outcome $\widehat{y}_{1:K}$ by \ours\ as follows:

\vspace{-2em}
\begin{small}
\begin{align*}
    \widehat{y'}_{1:K} = (\hat{y'} + \lambda \widehat{y}_{1:K})/(1 + \lambda),
\end{align*} \par
\end{small}
\vspace{-1.2em}

in which $\lambda$ is a hyperparameter for controlling the impact of $\hat{y}'$. Afterward, $\widehat{y}_{1:K}$ is replaced with $\widehat{y'}_{1:K}$ in  Equation \ref{eq: reward_function} or Equation \ref{eq: reward_function_2} for model training. In addition, to facilitate accurate $\hat{y'}$, we further minimize the loss involving $\hat{y'}$ and $y$ along with the Deep Q-learning loss.

\section{Experiments}\label{sec: experiments}
In this section, we aim to answer the following research questions about \ours:

{\small \textbf{RQ1:}} Does \ours\ produce faithful explanations? \\
{\small \textbf{RQ2:}} How does the accuracy of \ours\ perform compared to existing self-interpretable models and black-box models?

\vspace{-0.05in}
\subsection{Setup}\label{sec: exp_settings}

\textbf{Datasets.}
We evaluate across tabular, text, and image datasets, covering diverse categories of treatment variables. Specifically, 
we select 
\ihdp\ \citep{hill2011bayesian},
\tcga\ \citep{weinstein2013cancer}
\ihdpc\ (a variant of \ihdp), and \news\ for tabular setting, the Enriched Equity Evaluation Corpus (\eeec) dataset \citep{kiritchenko2018examining} for text setting and \uganda\ \citep{JJD-Heterogeneity, JJD-Confounding} dataset for the image setting.
We summarize the modality, categories of treatment and dose variables, and number of features for each dataset in Table \ref{tab: datasets}, with more details in Appendix \ref{sec: dataset}.

\begin{table}[]
\small
\begin{tabular}{llllll}
\toprule
Dataset & Type & Treatment & Dose & \# Features  \\
\midrule
IHDP    & Tabular  &    2     &  \xmark    &  25  \\        
TCGA    & Tabular  &    3     &  \cmark   & 4000   \\      
IHDP-C  & Tabular  &     cont.  &  \xmark    & 25   \\     
News    & Tabular  &     cont.  &   \xmark   & 2000    \\      
EEEC    & Text     &       2    &  \xmark    &  500   \\      
Uganda  & Image    &      2     &   \xmark   & 20   \\      
\bottomrule
\end{tabular}
\caption{Datasets used for evaluation (cont. means continous)}
\label{tab: datasets}
\vspace{-1.2em}
\end{table}

\textbf{Baselines}. We use extensive baselines for neural network models, self-interpretable models, and post-hoc explainers.

\textit{Neural network models.} For neural networks, we select the state-of-the-art models: TransTEE \citep{zhang2022exploring}, TVAE \citep{xue2023assisting}, 
Dragonnet \citep{shi2019adapting}, TARNet \citep{shalit2017estimating}, Ganite 
\citep{yoon2018ganite}, DRNet \citep{schwab2020learning}, and VCNet \citep{nie2020vcnet}.
Not all of these models support all categories of treatment variables, as discussed in Appendix \ref{sec: add_baseline}.
Also, since our regularization strategy can be regarded as the integration of two models through weighted summation, we compare our regularized backbone (TransTEE) against the integration of TransTEE and another top-performing neural network model (Dragonnet for \ihdp, \eeec, and \uganda\ dataset, VCNet for \tcga, DRNet for \ihdpc) 
in the same manner. 

\textit{Self-interpretable models.} We compare against classical self-interpretable models, e.g., 
Causal Forest \citep{athey2019estimating}, Bayesian Additive Regression Trees (BART) \cite{bart2010, bart2020},   
decision tree (DT), and random forests (RF), in which the latter two are integrated into 
R-learner \citep{nie2021quasi} for treatment effect estimation. 
We also adapt three general-purpose self-interpretable models to treatment effect estimation---ENRL \citep{shi2022explainable}, ProtoVAE \citep{gautam2022protovae}\footnote{ProtoVAE is designed for image data. We therefore only compare \ours\ against this method on the \uganda\ dataset.}, and Neural Additive Model (NAM) \citep{agarwal2021neural}, which generate rules, prototypes, and feature attributes as explanations respectively. For tree-based models among these methods, 
we maintain the same explanation complexity as \ours. For the sake of completeness we also conduct additional experiments to vary the complexity (e.g., the number of trees and tree depth) of all self-interpretable models, provided in Table \ref{tab: self-interpret-high-depth} in Appendix \ref{sec: varied_complexity}; \ours\ outperforms self-interpretable models even when they are configured to high complexity. 

\textit{Post-hoc explainers.} We apply several post-hoc explainers to the TransTEE model to evaluate the consistency of explanations. Thy include Lore \citep{guidotti2018local}, Anchor \citep{ribeiro2018anchors}, Lime \citep{ribeiro2016should}, Shapley values \citep{shrikumar2017learning}, and decision tree-based model distillation methods \citep{frosst2017distilling} (hereinafter referred to as Model Distillation). We enforce the complexity of these explanations to be the same as \ours\ for fair comparison.

\textbf{Evaluation metrics.}
We primarily evaluate faithfulness by measuring {\it consistency}, proposed by \citep{dasgupta2022framework};  we also measure \textit{sufficiency}, which is a generalization of consistency. Briefly, 
{\it consistency} quantifies how similar the model predictions are between samples with the same explanations, while {\it sufficiency} generalizes this notion to arbitrary samples \textit{satisfying} the same explanations (but not necessarily \textit{producing} the same explanations). Appendix \ref{sec: metrics} provides formal definitions of these two metrics.

We evaluate ITE estimation accuracy using different metrics for datasets to account for different settings.
For the datasets with binary treatment variables, by following prior studies \citep{shi2019adapting, shalit2017estimating}, we employ the absolute error in average treatment effect, i.e., $\epsilon_{ATE} = |\frac{1}{n}\sum_{i=1}^n ITE(x_i) - \frac{1}{n}\sum_{i=1}^n \widehat{ITE}(x_i)|$. Both in-sample and out-of-sample $\epsilon_{ATE}$ are reported, i.e., $\epsilon_{ATE}$ evaluated on the training set and test set respectively.
For the datasets with either continuous dose variables or continuous treatment variables, we follow \citep{zhang2022exploring} to report the average mean square errors $AMSE$ between the ground-truth outcome and predicted outcome on the test set.
 For the image dataset, \uganda, since there is no ground-truth ITE, we therefore only report the average outcome errors between the ground-truth outcomes and the predicted outcomes conditioned on observed treatments, i.e., $\epsilon_{\text{outcome}}=\frac{1}{n}\sum_{i=1}^n|y_i-\hat{y}_i|$.

\textbf{Configurations for \ours.} We consider two variants of \ours: vanilla \ours\ and backbone models regularized with \ours\ (denoted as \ours\ + TransTEE). For both variants, we perform grid search on the number of conjunctions, $K$, and the number of disjunctions, $H$, and the regularization coefficient $\lambda$, in which $K \in \{2,4,6\}$, $H \in \{1,3\}$ and $\lambda \in \{0,2,5, 8, 10\}$. 

\textbf{Extracting features from text and image data.} 
For text data, we employ the word frequency features such as ``Term Frequency-Inverse Document Frequency'' \citep{baeza1999modern}. 
For image data, we follow \citep{fel2023holistic} to extract interpretable concepts as the features, which we further discuss in Appendix \ref{sec: feature_extraction}.
Note that we only extract features for \ours\ and self-interpretable baselines such as Causal Forest while all neural network model-based baselines still take raw images or text data as input.

\begin{table*}[t]
\scriptsize
\centering
\setlength{\tabcolsep}{4pt}
\begin{tabular}{c|c|cc|cc|c|c|c}\toprule
\textbf{Modality} $\rightarrow$ & & \multicolumn{5}{c|}{\textbf{Tabular}} & \textbf{Text} & \textbf{Image} \\ \midrule
\textbf{Dataset} $\rightarrow$  &  & \multicolumn{2}{c|}{\ihdp} & \multicolumn{2}{c|}{\tcga} & \multicolumn{1}{c|}{\ihdpc} & \multicolumn{1}{c|}{\eeec} & \multicolumn{1}{c}{\uganda} \\ \midrule
\makecell{\textbf{Method} $\downarrow$} & 
\makecell{Self- \\ interp.?} & 
\makecell{$\epsilon_{\text{ATE}}$\\(In-sample)} & \makecell{$\epsilon_{\text{ATE}}$ \\ (Out-of-sample)} &\makecell{$\epsilon_{\text{ATE}}$\\(In-sample)} &\makecell{$\epsilon_{\text{ATE}}$\\ (Out-of-sample)}&AMSE & $\epsilon_{\text{ATE}}$ & $\epsilon_{\text{outcome}}$ \\ \hline 

\rowcolor{lightgray}Decision Tree & \cmark &0.693$\pm$0.028&0.613$\pm$0.045& 0.200$\pm$0.012&0.202$\pm$0.012 &22.136$\pm$1.741&0.014$\pm$0.016& 1,796$\pm$0.021\\
\rowcolor{lightgray}Random Forest & \cmark &0.801$\pm$0.039&0.666$\pm$0.055
&19.214$\pm$0.163 &19.195$\pm$0.163
&21.348$\pm$1.222&0.525$\pm$0.573& 1.820$\pm$0.013\\
\rowcolor{lightgray}NAM & \cmark &0.260$\pm$0.031&0.250$\pm$0.032 &- & - &24.706$\pm$0.756&0.152$\pm$0.041 & 1.710$\pm$0.098\\
\rowcolor{lightgray}ENRL & \cmark &4.104$\pm$1.060& 3.759$\pm$0.087&10.938$\pm$2.019 &10.942$\pm$2.019&24.720$\pm$0.985&-& 1.800$\pm$0.143\\
\rowcolor{lightgray}Causal Forest & \cmark &0.177$\pm$0.027&0.240$\pm$0.024&-&-&-&0.011$\pm$0.001&-\\
\rowcolor{lightgray}BART & \cmark &1.335$\pm$0.159&1.132$\pm$0.125&230.74$\pm$0.312&236.81$\pm$0.531&12.063$\pm$0.410&0.014$\pm$0.016& \underline{1.676$\pm$0.042}\\
\rowcolor{lightgray} \ours\ (ours) & \cmark  & 0.089$\pm$0.040 & 0.150$\pm$0.034 & 0.076$\pm$0.019 & {0.098$\pm$0.007} & 0.801$\pm$0.165 & \underline{0.001$\pm$0.017} & 1.662$\pm$0.136 \\ 
\hline
\hline
Dragonnet & \xmark & 0.197$\pm$0.023 &0.229$\pm$0.025 &-&-&-& 0.011$\pm$0.018& 1.709$\pm$0.127\\ 
TVAE & \xmark &3.914$\pm$0.065&3.573$\pm$0.087 &-&-&-&0.521$\pm$0.080& 49.55$\pm$2.38\\ 
TARNet & \xmark &0.178$\pm$0.028 &0.441$\pm$0.088 &1.421$\pm$0.078 &1.421$\pm$0.078& 12.967$\pm$1.781&0.009$\pm$0.018&1.743$\pm$0.135 \\ 
Ganite & \xmark &0.430$\pm$0.043 & 0.508$\pm$0.068&- &-&-&1.998$\pm$0.016& 1.766$\pm$0.024 \\ 
DRNet & \xmark &0.193$\pm$0.034 &0.433$\pm$0.080 &1.374$\pm$0.086 &1.374$\pm$0.085&11.071$\pm$0.994&0.008$\pm$0.018& 1.748$\pm$0.127\\ 
VCNet & \xmark &3.996$\pm$0.106 & 3.695$\pm$0.077&0.292$\pm$0.074&0.292$\pm$0.074&-&0.011$\pm$0.017&1.890$\pm$0.110 \\ 
TransTEE & \xmark &\textbf{0.081$\pm$0.009} &\underline{0.138$\pm$0.014} & \underline{0.070$\pm$0.010} &\underline{0.067$\pm$0.008}&\underline{0.112$\pm$0.008}&\underline{0.003$\pm$0.017}&1.707$\pm$0.158 \\ 
TransTEE + NN & \xmark & 0.224$\pm$0.022 & 0.300$\pm$0.035 &0.093$\pm$0.013&0.094$\pm$0.013& 0.363$\pm$0.033 & 0.006$\pm$0.008 & 2.001$\pm$0.425\\
\makecell{TransTEE + \ours\\ (ours)} & \xmark & \underline{0.082$\pm$0.009} &\textbf{0.120$\pm$0.014}&\textbf{0.058$\pm$0.010}&\textbf{0.055$\pm$0.009}&\textbf{0.102$\pm$0.007}&\textbf{0.001$\pm$0.017}& \textbf{1.662$\pm$0.136}\\\bottomrule 
\hline
\end{tabular}
\caption{
ITE estimation errors (lower is better). 
We \textbf{bold} the smallest estimation error for each dataset,  and \underline{underline} the second smallest one. We show that \ours\ outperforms self-interpretable models across all datasets, particularly on text ($\epsilon_{ATE} = 0.001$ for \ours\ v/s $0.011$ for causal forest). \ours\ is comparable to the performance of black-box models, with the exception of the \ihdpc\ dataset.  Regularizing black-box models with \ours\ (shown here as TransTEE + \ours)  outperforms \textit{all} models.}
\label{tab: quantative_res}
\vspace{-5px}
\end{table*}

\begin{figure*}[t]
  \centering
  \includegraphics[width=0.75\textwidth]{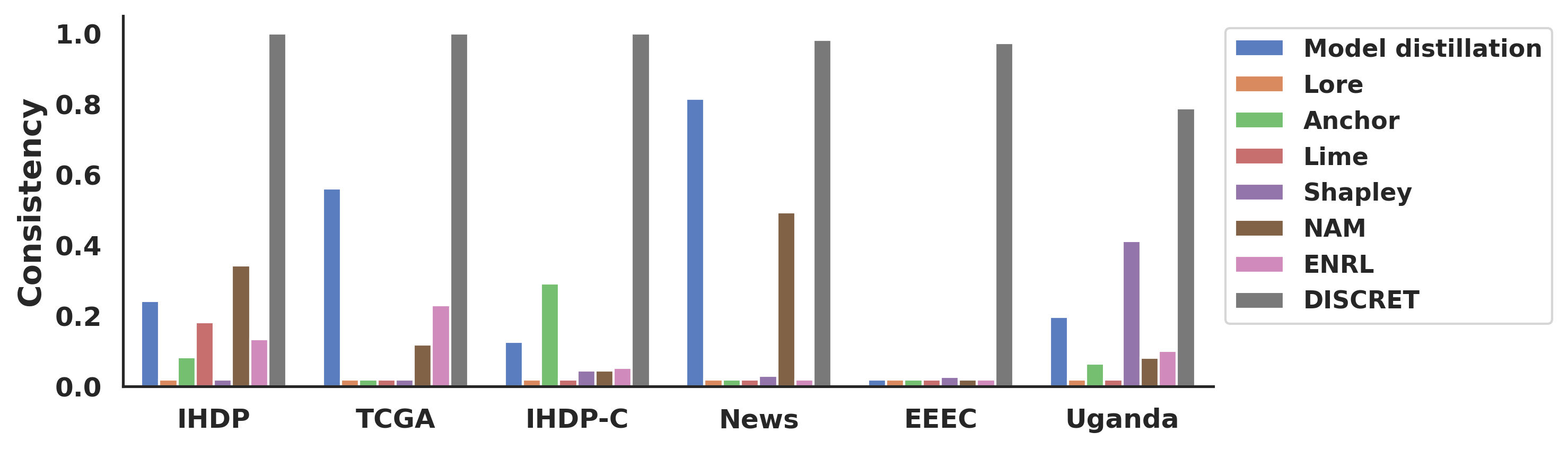}
  \vspace{-12px}
  \caption{Consistency scores (higher is better) for \ours\, and a black-box model (TransTEE) combined with a post-hoc explainer. Our results confirm that \ours\ produces faithful explanations, and importantly, show that post-hoc explanations are rarely faithful, as evidenced by low consistency scores across datasets. 
  }
  \vspace{-10px}
  \label{fig:consistency}
\end{figure*}

\subsection{RQ1: Faithfulness Evaluation on Explanations}
We evaluate the {\it consistency} and {\it sufficiency} of explanations produced by \ours, the state-of-the-art self-interpretable models, and the post-hoc explainers. 
For those explainers producing feature-based explanations, we also follow \citep{dasgupta2022framework} to discretize the feature importance scores, say, by selecting the Top-K most important features, for identifying samples with exactly the same explanations. 
For fair comparison, we evaluate the explanations generated w.r.t.~the same set of features extracted from NLP and image data. 

We graph the consistency scores in Figure \ref{fig:consistency}; full consistency scores are provided in Table \ref{tab: consistency} in Appendix \ref{sec:consistency_scores}. As Figure \ref{fig:consistency} indicates, \ours\ always achieves near 100\% consistency since the same explanations in \ours\ deterministically retrieve the same subgroup from the database, thus generating the same model predictions. In contrast, the baseline explanation methods generally have extremely low
consistency scores in most cases. We also include the sufficiency score results in Table \ref{tab: tabular_sufficiency}, which shows that \ours\ can still obtain higher sufficiency scores in most cases than other explanation methods. 

\subsection{RQ2: Accuracy Evaluation on ITE Predictions}
We include the ITE estimation results for tabular setting, NLP setting, and image setting in Table \ref{tab: quantative_res}. 
For brevity, the results on \news\ dataset 
are not reported in Table \ref{tab: quantative_res}, but are included in Table \ref{tab: news_dataset} 
in Appendix \ref{sec: news_res}.

As Table \ref{tab: quantative_res} shows, \ours\ outperforms all the self-interpretable methods, particularly on text ($\epsilon_{ATE} = 0.011$ for \ours\ v/s 0.0011 for causal forest). Compared to black-box models, \ours\ only performs slightly worse in most cases, and even outperforms them on the \uganda\ dataset. The outperformance is possibly caused by equivalent outcome values among most samples in this dataset as suggested by Figure \ref{fig:uganda_outcome} in Appendix \ref{sec: uganda_consistent}. Hence, consistent predictions (e.g., by \ours) between samples lead to a lower error rate. \ours\ underperforms TransTEE on \ihdpc, likely due to the complexity of the dataset; \ours\ still beats all other black-box models on this dataset.

Further, backbone models (TransTEE) regularized with \ours\ outperform the state-of-the-art neural network models, reducing their estimation errors by as much as 18\% (TCGA dataset.) Interestingly, for the \ihdp\ dataset, TransTEE outperforms its regularized version only on in-sample (i.e. training) error, but underperforms the regularized version when we consider out-of-sample (i.e. test) error. 
Intuitively, \ours's regularization incentivizes the underlying backbone's training (TransTEE) to focus on only a subset of the most important features, thereby reducing its variance and allowing it to perform better.

\begin{figure*}[!t]
  \centering
  \includegraphics[width=\textwidth]{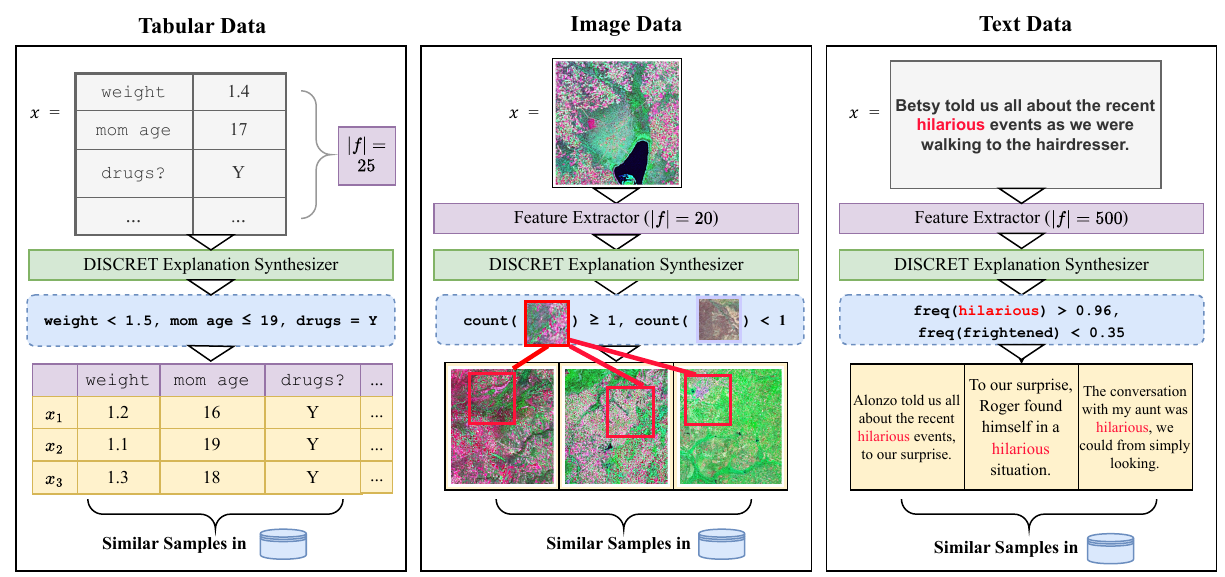} 
  \caption{\ours\ identifies similar samples across diverse datasets -- tabular (\ihdp), image (\uganda), and text (\eeec). 
  1) In the first setting, 
  given a tabular sample $x$ describing a premature infant, \ours\ establishes a rule associating extremely underweight ($\mathtt{weight} \le 1.5$) infants born to teenage mothers ($\mathtt{mom}\: \mathtt{age} \le 19$) with a history of drug use; such groups likely benefit from childcare visits (treatment), and will have highly improved cognitive outcomes. 2) In the second scenario on satellite images, 
  for a sample $x$, \ours\ discerns a rule based on the presence of concepts like "high soil moisture" (reddish-pink pixels) and absence of minimal soil (brown pixels); thus characterizing areas with high soil moisture. \ours's synthesized rule aligns with findings that government grants (treatment) are more effective in areas with higher soil moisture content (outcome) \citep{JJD-Heterogeneity}. 3) Likewise, the text setting aims to measure the impact of gender (treatment) on the mood (outcome). Given a sentence $x$ where the gendered noun ("Betsy") does not affect the semantic meaning, \ours's rule focuses on mood-linked words in the sentence, i.e., "hilarious".
}
  \label{fig:iclr_example}
\end{figure*}

\subsection{Misc. Experiments}
Appendix \ref{sec: addition_res} includes other experiments such as the ablation studies (Appendix \ref{sec: ablation}) with respect to dataset size and reward functions, and evaluating the training cost of \ours\ (Appendix \ref{sec:training_cost}).

\section{Related Work}

\textbf{Treatment effect estimation.} ML-based approaches to determine treatment effects can be divided into self-interpretable (often, tree-based), and deep-learning approaches. Deep-learning approaches mainly focus on how to appropriately incorporate treatment variables and covariates by designing various ad-hoc neural networks, such as Dragonnet \citep{shi2019adapting}, DRNet \citep{schwab2020learning} and TARNet \citep{shalit2017estimating}. Recently, it has been demonstrated that transformers \citep{zhang2022exploring} can encode covariates and treatment variables without any ad-hoc adaptations, which outperforms other deep-learning approaches. We thus select transformers as our default backbone models.

Self-interpretable models can be further subdivided into approaches specifically meant for causal inference, such as causal forests \cite{wager2018estimation}, and general-purpose models adapted to ITE such as random forests, Bayesian Additive Regression Trees (BART) \cite{bart2020}, ENRL \cite{shi2022explainable}. As shown earlier, these approaches are faithful, but often inaccurate. Prior work for treatment \textit{recommendation} has also used rules to drive model decisions \cite{lakkaraju17a}, but use static rule sets (rules and partitions of subgroups are pre-determined) and have been restricted to learning via Markov processes. In contrast, \ours\ enables dynamic rule generation for \textit{each sample} and predicts ITE accurately with deep reinforcement learning. 
Past approaches for treatment recommendation such as LEAP \cite{leap2017} have used reinforcement learning to fine-tune models, but were not inherently interpretable.

Recent work \citep{curth2024, chen2023covariate, nie2021quasi, kim2019learning} discusses key challenges in all ML-based solutions to ITE, notably interpretability and identifiability (i.e, ensuring the dataset contains appropriate features to infer treatment effects). Evidently, our work tackles interpretability by generating rule-based explanations. DISCRET enhances identifiability for image data via concept-extraction, in line with a suggestion by \cite{curth2024} to extract lower-dimensional information from the original feature space. 

\textbf{Model interpretability.} There are two lines of work to address the model interpretability issues, one is for interpreting black-box models in a post-hoc manner while the other one is for building a self-interpretable model. Post-hoc explainers could explain models with feature importance (e.g., Lime \citep{ribeiro2016should} and Shapley values \citep{shrikumar2017learning}) or logic rules (e.g., Lore \citep{guidotti2018local}, Anchor \citep{ribeiro2018anchors}). However, post-hoc explanations are usually not faithful \citep{rudin2019stop, bhalla2023verifiable}. To mitigate this issue, there are recent and ongoing efforts \citep{shi2022explainable,gautam2022protovae,huang2023one, you2023sum} in the literature to develop self-interpretable models.
For example, ENRL \citep{shi2022explainable} to learn tree-like decision rules and leverage them for predictions, ProtoVAE \citep{gautam2022protovae} learns prototypes and predicts the label of one test sample by employing its similarity to prototypes. 

\paragraph{Integrating rules into neural models.} 
How to integrate logic rules into neural models has been extensively studied \citep{seo2021controllingNN, seo2021controlling, khope2022critical, naik2023machine, naik2024torchql}. 
For instance, DeepCTRL \citep{seo2021controllingNN} has explored the use of \textit{existing} rules to improve the training of deep neural networks; in contrast, \ours\ does not require existing rules; it effectively learns (i.e. synthesizes) rules from training data and can be incorporated into neural models as regularization.

\paragraph{Program synthesis.} Program synthesis concerns synthesizing human-readable programs out of data, which has been extensively studied in the past few decades. Initial solutions, e.g., ILASP \cite{law2020ilasp} and Prosynth \cite{popl20} utilize pure symbolic reasoning to search logic rules. Recent approaches have explored neural-based synthesis, such as NeuralLP \cite{yang2017differentiable} and NLIL \cite{yang2019learn} for guiding the rule generation process.

\section{Conclusion \& Limitations}

In this work, we tackled the challenge of designing a faithful yet accurate AI model, \ours,
in the context of ITE estimation. 
To achieve this, we developed a novel deep reinforcement learning algorithm that is tailored to the task of synthesizing rule-based explanations. 
Extensive experiments across tabular, image, and text data demonstrate that \ours\ produces the most consistent (i.e. faithful) explanations, outperforms the self-interpretable models, is comparable in accuracy to black-box models, and can be combined with existing black-box models to achieve state-of-the-art accuracy. 

However, some limitations remain. \ours\ requires users to fix the grammar of explanations and set suitable hyper-parameters like the number of literals prior to training. Additionally, \ours\ relies on the extraction of interpretable symbols from unstructured data, like images. While the extraction of concepts from unstructured data is a widespread practice [8-10], DISCRET requires these concepts as input, which may not always be readily available. We leave these as avenues for future work.

\section*{Impact Statement}
Our work aims at the societally pertinent problem of Individual Treatment Estimation. A key positive impact of our work is improving trust in the faithfulness and explainability of ML predictions, especially in healthcare and poverty alleviation. In addition, we provide transparency to decision-makers who rely on treatment outcomes such as clinicians and policymakers. We do not foresee negative impacts of our work. As with all ML models, we caution end-users to rigorously test models for properties such as fairness (e.g. for implicit bias) before deploying them.

\section*{Acknowledgement}
This work is supported by ``The Fundamental Research Funds for the Central Universities, Peking University'', NSF Grant 2313010, and NIH Grants RF1AG063481 and U01CA274576. 

\bibliography{reference}

\begin{thebibliography}{65}
\providecommand{\natexlab}[1]{#1}
\providecommand{\url}[1]{\texttt{#1}}
\expandafter\ifx\csname urlstyle\endcsname\relax
  \providecommand{\doi}[1]{doi: #1}\else
  \providecommand{\doi}{doi: \begingroup \urlstyle{rm}\Url}\fi

\bibitem[Achanta et~al.(2012)Achanta, Shaji, Smith, Lucchi, Fua, and S{\"u}sstrunk]{achanta2012slic}
Achanta, R., Shaji, A., Smith, K., Lucchi, A., Fua, P., and S{\"u}sstrunk, S.
\newblock Slic superpixels compared to state-of-the-art superpixel methods.
\newblock \emph{IEEE transactions on pattern analysis and machine intelligence}, 34\penalty0 (11):\penalty0 2274--2282, 2012.

\bibitem[Agarwal et~al.(2021)Agarwal, Melnick, Frosst, Zhang, Lengerich, Caruana, and Hinton]{agarwal2021neural}
Agarwal, R., Melnick, L., Frosst, N., Zhang, X., Lengerich, B., Caruana, R., and Hinton, G.~E.
\newblock Neural additive models: Interpretable machine learning with neural nets.
\newblock \emph{Advances in neural information processing systems}, 34:\penalty0 4699--4711, 2021.

\bibitem[Anderson et~al.(1980)Anderson, Kish, and Cornell]{anderson1980stratification}
Anderson, D.~W., Kish, L., and Cornell, R.~G.
\newblock On stratification, grouping and matching.
\newblock \emph{Scandinavian Journal of Statistics}, pp.\  61--66, 1980.

\bibitem[Athey \& Wager(2019)Athey and Wager]{athey2019estimating}
Athey, S. and Wager, S.
\newblock Estimating treatment effects with causal forests: An application.
\newblock \emph{Observational studies}, 5\penalty0 (2):\penalty0 37--51, 2019.

\bibitem[Baeza-Yates et~al.(1999)Baeza-Yates, Ribeiro-Neto, et~al.]{baeza1999modern}
Baeza-Yates, R., Ribeiro-Neto, B., et~al.
\newblock \emph{Modern information retrieval}, volume 463.
\newblock ACM press New York, 1999.

\bibitem[Basu et~al.(2011)Basu, Polsky, and Manning]{basu2011estimating}
Basu, A., Polsky, D., and Manning, W.~G.
\newblock Estimating treatment effects on healthcare costs under exogeneity: is there a ‘magic bullet’?
\newblock \emph{Health Services and Outcomes Research Methodology}, 11:\penalty0 1--26, 2011.

\bibitem[Bhalla et~al.(2023)Bhalla, Srinivas, and Lakkaraju]{bhalla2023verifiable}
Bhalla, U., Srinivas, S., and Lakkaraju, H.
\newblock Verifiable feature attributions: A bridge between post hoc explainability and inherent interpretability.
\newblock In \emph{ICML 3rd Workshop on Interpretable Machine Learning in Healthcare (IMLH)}, 2023.

\bibitem[Bica et~al.(2020)Bica, Jordon, and van~der Schaar]{bica2020estimating}
Bica, I., Jordon, J., and van~der Schaar, M.
\newblock Estimating the effects of continuous-valued interventions using generative adversarial networks.
\newblock \emph{Advances in Neural Information Processing Systems}, 33:\penalty0 16434--16445, 2020.

\bibitem[Chen et~al.(2023{\natexlab{a}})Chen, Heng, Long, and Zhang]{chen2023testing}
Chen, K., Heng, S., Long, Q., and Zhang, B.
\newblock Testing biased randomization assumptions and quantifying imperfect matching and residual confounding in matched observational studies.
\newblock \emph{Journal of Computational and Graphical Statistics}, 32\penalty0 (2):\penalty0 528--538, 2023{\natexlab{a}}.

\bibitem[Chen et~al.(2023{\natexlab{b}})Chen, Yin, and Long]{chen2023covariate}
Chen, K., Yin, Q., and Long, Q.
\newblock Covariate-balancing-aware interpretable deep learning models for treatment effect estimation.
\newblock \emph{Statistics in Biosciences}, pp.\  1--19, 2023{\natexlab{b}}.

\bibitem[Chipman et~al.(2010)Chipman, George, and McCulloch]{bart2010}
Chipman, H.~A., George, E.~I., and McCulloch, R.~E.
\newblock {BART: Bayesian additive regression trees}.
\newblock \emph{The Annals of Applied Statistics}, 4\penalty0 (1):\penalty0 266 -- 298, 2010.
\newblock \doi{10.1214/09-AOAS285}.
\newblock URL \url{https://doi.org/10.1214/09-AOAS285}.

\bibitem[Curth et~al.(2024)Curth, Peck, McKinney, Weatherall, and van~der Schaar]{curth2024}
Curth, A., Peck, R.~W., McKinney, E., Weatherall, J., and van~der Schaar, M.
\newblock Using machine learning to individualize treatment effect estimation: Challenges and opportunities.
\newblock \emph{Clinical Pharmacology \& Therapeutics}, 115\penalty0 (4):\penalty0 710--719, 2024.
\newblock \doi{https://doi.org/10.1002/cpt.3159}.
\newblock URL \url{https://ascpt.onlinelibrary.wiley.com/doi/abs/10.1002/cpt.3159}.

\bibitem[Dasgupta et~al.(2022)Dasgupta, Frost, and Moshkovitz]{dasgupta2022framework}
Dasgupta, S., Frost, N., and Moshkovitz, M.
\newblock Framework for evaluating faithfulness of local explanations.
\newblock In \emph{International Conference on Machine Learning}, pp.\  4794--4815. PMLR, 2022.

\bibitem[Dixit(1990)]{dixit1990optimization}
Dixit, A.~K.
\newblock \emph{Optimization in economic theory}.
\newblock Oxford University Press, USA, 1990.

\bibitem[Dziugaite et~al.(2020)Dziugaite, Ben-David, and Roy]{dziugaite2020enforcing}
Dziugaite, G.~K., Ben-David, S., and Roy, D.~M.
\newblock Enforcing interpretability and its statistical impacts: Trade-offs between accuracy and interpretability.
\newblock \emph{arXiv preprint arXiv:2010.13764}, 2020.

\bibitem[Feder et~al.(2021)Feder, Oved, Shalit, and Reichart]{feder2021causalm}
Feder, A., Oved, N., Shalit, U., and Reichart, R.
\newblock Causalm: Causal model explanation through counterfactual language models.
\newblock \emph{Computational Linguistics}, 47\penalty0 (2):\penalty0 333--386, 2021.

\bibitem[Fel et~al.(2023)Fel, Boutin, Moayeri, Cad{\`e}ne, Bethune, Chalvidal, Serre, et~al.]{fel2023holistic}
Fel, T., Boutin, V., Moayeri, M., Cad{\`e}ne, R., Bethune, L., Chalvidal, M., Serre, T., et~al.
\newblock A holistic approach to unifying automatic concept extraction and concept importance estimation.
\newblock \emph{arXiv preprint arXiv:2306.07304}, 2023.

\bibitem[Frosst \& Hinton(2017)Frosst and Hinton]{frosst2017distilling}
Frosst, N. and Hinton, G.
\newblock Distilling a neural network into a soft decision tree.
\newblock \emph{arXiv preprint arXiv:1711.09784}, 2017.

\bibitem[Gautam et~al.(2022)Gautam, Boubekki, Hansen, Salahuddin, Jenssen, H{\"o}hne, and Kampffmeyer]{gautam2022protovae}
Gautam, S., Boubekki, A., Hansen, S., Salahuddin, S., Jenssen, R., H{\"o}hne, M., and Kampffmeyer, M.
\newblock Protovae: A trustworthy self-explainable prototypical variational model.
\newblock \emph{Advances in Neural Information Processing Systems}, 35:\penalty0 17940--17952, 2022.

\bibitem[Guidotti et~al.(2018)Guidotti, Monreale, Ruggieri, Pedreschi, Turini, and Giannotti]{guidotti2018local}
Guidotti, R., Monreale, A., Ruggieri, S., Pedreschi, D., Turini, F., and Giannotti, F.
\newblock Local rule-based explanations of black box decision systems.
\newblock \emph{arXiv preprint arXiv:1805.10820}, 2018.

\bibitem[Hahn et~al.(2020)Hahn, Murray, and Carvalho]{bart2020}
Hahn, P.~R., Murray, J.~S., and Carvalho, C.~M.
\newblock {Bayesian Regression Tree Models for Causal Inference: Regularization, Confounding, and Heterogeneous Effects (with Discussion)}.
\newblock \emph{Bayesian Analysis}, 15\penalty0 (3):\penalty0 965 -- 2020, 2020.
\newblock \doi{10.1214/19-BA1195}.
\newblock URL \url{https://doi.org/10.1214/19-BA1195}.

\bibitem[Hill(2011)]{hill2011bayesian}
Hill, J.~L.
\newblock Bayesian nonparametric modeling for causal inference.
\newblock \emph{Journal of Computational and Graphical Statistics}, 20\penalty0 (1):\penalty0 217--240, 2011.

\bibitem[Huang et~al.(2023)Huang, Luo, Wang, Di, Li, and Luo]{huang2023one}
Huang, Y., Luo, F., Wang, X., Di, Z., Li, B., and Luo, B.
\newblock A one-size-fits-three representation learning framework for patient similarity search.
\newblock \emph{Data Science and Engineering}, 8\penalty0 (3):\penalty0 306--317, 2023.

\bibitem[Jaakkola et~al.(1993)Jaakkola, Jordan, and Singh]{jaakkola1993convergence}
Jaakkola, T., Jordan, M., and Singh, S.
\newblock Convergence of stochastic iterative dynamic programming algorithms.
\newblock \emph{Advances in neural information processing systems}, 6, 1993.

\bibitem[Jerzak et~al.(2022)Jerzak, Johansson, and Daoud]{jerzak2022image}
Jerzak, C.~T., Johansson, F., and Daoud, A.
\newblock Image-based treatment effect heterogeneity.
\newblock \emph{arXiv preprint arXiv:2206.06417}, 2022.

\bibitem[Jerzak et~al.(2023{\natexlab{a}})Jerzak, Johansson, and Daoud]{JJD-Confounding}
Jerzak, C.~T., Johansson, F., and Daoud, A.
\newblock Integrating earth observation data into causal inference: Challenges and opportunities.
\newblock \emph{ArXiv Preprint}, 2023{\natexlab{a}}.

\bibitem[Jerzak et~al.(2023{\natexlab{b}})Jerzak, Johansson, and Daoud]{JJD-Heterogeneity}
Jerzak, C.~T., Johansson, F., and Daoud, A.
\newblock Image-based treatment effect heterogeneity.
\newblock \emph{Proceedings of the Second Conference on Causal Learning and Reasoning (CLeaR), Proceedings of Machine Learning Research (PMLR)}, 213:\penalty0 531--552, 2023{\natexlab{b}}.

\bibitem[Khope \& Elias(2022)Khope and Elias]{khope2022critical}
Khope, S.~R. and Elias, S.
\newblock Critical correlation of predictors for an efficient risk prediction framework of icu patient using correlation and transformation of mimic-iii dataset.
\newblock \emph{Data Science and Engineering}, 7\penalty0 (1):\penalty0 71--86, 2022.

\bibitem[Kim \& Bastani(2019)Kim and Bastani]{kim2019learning}
Kim, C. and Bastani, O.
\newblock Learning interpretable models with causal guarantees.
\newblock \emph{arXiv preprint arXiv:1901.08576}, 2019.

\bibitem[Kiritchenko \& Mohammad(2018)Kiritchenko and Mohammad]{kiritchenko2018examining}
Kiritchenko, S. and Mohammad, S.~M.
\newblock Examining gender and race bias in two hundred sentiment analysis systems.
\newblock \emph{arXiv preprint arXiv:1805.04508}, 2018.

\bibitem[Kline \& Luo(2022)Kline and Luo]{kline2022psmpy}
Kline, A. and Luo, Y.
\newblock Psmpy: a package for retrospective cohort matching in python.
\newblock In \emph{2022 44th Annual International Conference of the IEEE Engineering in Medicine \& Biology Society (EMBC)}, pp.\  1354--1357. IEEE, 2022.

\bibitem[Lakkaraju \& Rudin(2017)Lakkaraju and Rudin]{lakkaraju17a}
Lakkaraju, H. and Rudin, C.
\newblock {Learning Cost-Effective and Interpretable Treatment Regimes}.
\newblock In Singh, A. and Zhu, J. (eds.), \emph{Proceedings of the 20th International Conference on Artificial Intelligence and Statistics}, volume~54 of \emph{Proceedings of Machine Learning Research}, pp.\  166--175. PMLR, 20--22 Apr 2017.
\newblock URL \url{https://proceedings.mlr.press/v54/lakkaraju17a.html}.

\bibitem[Law et~al.(2020)Law, Russo, and Broda]{law2020ilasp}
Law, M., Russo, A., and Broda, K.
\newblock The ilasp system for inductive learning of answer set programs.
\newblock \emph{arXiv preprint arXiv:2005.00904}, 2020.

\bibitem[Liu et~al.(2022)Liu, Chen, and Wong]{liu2022causalegm}
Liu, Q., Chen, Z., and Wong, W.~H.
\newblock Causalegm: a general causal inference framework by encoding generative modeling.
\newblock \emph{arXiv preprint arXiv:2212.05925}, 2022.

\bibitem[Mnih et~al.(2013)Mnih, Kavukcuoglu, Silver, Graves, Antonoglou, Wierstra, and Riedmiller]{mnih2013playing}
Mnih, V., Kavukcuoglu, K., Silver, D., Graves, A., Antonoglou, I., Wierstra, D., and Riedmiller, M.
\newblock Playing atari with deep reinforcement learning.
\newblock \emph{arXiv preprint arXiv:1312.5602}, 2013.

\bibitem[Naik et~al.(2023)Naik, Wu, Naik, and Wong]{naik2023machine}
Naik, A., Wu, Y., Naik, M., and Wong, E.
\newblock Do machine learning models learn statistical rules inferred from data?
\newblock In \emph{International Conference on Machine Learning}, pp.\  25677--25693. PMLR, 2023.

\bibitem[Naik et~al.(2024)Naik, Stein, Wu, Naik, and Wong]{naik2024torchql}
Naik, A., Stein, A., Wu, Y., Naik, M., and Wong, E.
\newblock Torchql: A programming framework for integrity constraints in machine learning.
\newblock \emph{Proceedings of the ACM on Programming Languages}, 8\penalty0 (OOPSLA1):\penalty0 833--863, 2024.

\bibitem[Nauta et~al.(2023)Nauta, Trienes, Pathak, Nguyen, Peters, Schmitt, Schl\"{o}tterer, van Keulen, and Seifert]{nauta2023explainableai}
Nauta, M., Trienes, J., Pathak, S., Nguyen, E., Peters, M., Schmitt, Y., Schl\"{o}tterer, J., van Keulen, M., and Seifert, C.
\newblock From anecdotal evidence to quantitative evaluation methods: A systematic review on evaluating explainable ai.
\newblock 55\penalty0 (13s), jul 2023.
\newblock ISSN 0360-0300.
\newblock \doi{10.1145/3583558}.
\newblock URL \url{https://doi.org/10.1145/3583558}.

\bibitem[Newman(2008)]{misc_bag_of_words_164}
Newman, D.
\newblock {Bag of Words}.
\newblock UCI Machine Learning Repository, 2008.
\newblock {DOI}: https://doi.org/10.24432/C5ZG6P.

\bibitem[Nie et~al.(2020)Nie, Ye, Nicolae, et~al.]{nie2020vcnet}
Nie, L., Ye, M., Nicolae, D., et~al.
\newblock Vcnet and functional targeted regularization for learning causal effects of continuous treatments.
\newblock In \emph{International Conference on Learning Representations}, 2020.

\bibitem[Nie \& Wager(2021)Nie and Wager]{nie2021quasi}
Nie, X. and Wager, S.
\newblock Quasi-oracle estimation of heterogeneous treatment effects.
\newblock \emph{Biometrika}, 108\penalty0 (2):\penalty0 299--319, 2021.

\bibitem[Pryzant et~al.(2021)Pryzant, Card, Jurafsky, Veitch, and Sridhar]{pryzant2021causal}
Pryzant, R., Card, D., Jurafsky, D., Veitch, V., and Sridhar, D.
\newblock Causal effects of linguistic properties.
\newblock In \emph{Proceedings of the 2021 Conference of the North American Chapter of the Association for Computational Linguistics: Human Language Technologies}, pp.\  4095--4109, 2021.

\bibitem[Radford et~al.(2021)Radford, Kim, Hallacy, Ramesh, Goh, Agarwal, Sastry, Askell, Mishkin, Clark, et~al.]{radford2021learning}
Radford, A., Kim, J.~W., Hallacy, C., Ramesh, A., Goh, G., Agarwal, S., Sastry, G., Askell, A., Mishkin, P., Clark, J., et~al.
\newblock Learning transferable visual models from natural language supervision.
\newblock In \emph{International conference on machine learning}, pp.\  8748--8763. PMLR, 2021.

\bibitem[Raghothaman et~al.(2020)Raghothaman, Mendelson, Zhao, Naik, and Scholz]{popl20}
Raghothaman, M., Mendelson, J., Zhao, D., Naik, M., and Scholz, B.
\newblock Provenance-guided synthesis of datalog programs.
\newblock In \emph{Proceedings of the ACM Symposium on Principles of Programming Languages (POPL)}, 2020.

\bibitem[Ribeiro et~al.(2016)Ribeiro, Singh, and Guestrin]{ribeiro2016should}
Ribeiro, M.~T., Singh, S., and Guestrin, C.
\newblock " why should i trust you?" explaining the predictions of any classifier.
\newblock In \emph{Proceedings of the 22nd ACM SIGKDD international conference on knowledge discovery and data mining}, pp.\  1135--1144, 2016.

\bibitem[Ribeiro et~al.(2018)Ribeiro, Singh, and Guestrin]{ribeiro2018anchors}
Ribeiro, M.~T., Singh, S., and Guestrin, C.
\newblock Anchors: High-precision model-agnostic explanations.
\newblock In \emph{Proceedings of the AAAI conference on artificial intelligence}, volume~32, 2018.

\bibitem[Rubin(1974)]{rubin1974estimating}
Rubin, D.~B.
\newblock Estimating causal effects of treatments in randomized and nonrandomized studies.
\newblock \emph{Journal of educational Psychology}, 66\penalty0 (5):\penalty0 688, 1974.

\bibitem[Rudin(2019)]{rudin2019stop}
Rudin, C.
\newblock Stop explaining black box machine learning models for high stakes decisions and use interpretable models instead.
\newblock \emph{Nature machine intelligence}, 1\penalty0 (5):\penalty0 206--215, 2019.

\bibitem[Schwab et~al.(2020)Schwab, Linhardt, Bauer, Buhmann, and Karlen]{schwab2020learning}
Schwab, P., Linhardt, L., Bauer, S., Buhmann, J.~M., and Karlen, W.
\newblock Learning counterfactual representations for estimating individual dose-response curves.
\newblock In \emph{Proceedings of the AAAI Conference on Artificial Intelligence}, volume~34, pp.\  5612--5619, 2020.

\bibitem[Seo et~al.(2021{\natexlab{a}})Seo, Arik, Yoon, Zhang, Sohn, and Pfister]{seo2021controlling}
Seo, S., Arik, S., Yoon, J., Zhang, X., Sohn, K., and Pfister, T.
\newblock Controlling neural networks with rule representations.
\newblock \emph{Advances in Neural Information Processing Systems}, 34:\penalty0 11196--11207, 2021{\natexlab{a}}.

\bibitem[Seo et~al.(2021{\natexlab{b}})Seo, Arik, Yoon, Zhang, Sohn, and Pfister]{seo2021controllingNN}
Seo, S., Arik, S.~{\"O}., Yoon, J., Zhang, X., Sohn, K., and Pfister, T.
\newblock Controlling neural networks with rule representations.
\newblock In \emph{Neural Information Processing Systems}, 2021{\natexlab{b}}.
\newblock URL \url{https://api.semanticscholar.org/CorpusID:235435676}.

\bibitem[Shalit et~al.(2017)Shalit, Johansson, and Sontag]{shalit2017estimating}
Shalit, U., Johansson, F.~D., and Sontag, D.
\newblock Estimating individual treatment effect: generalization bounds and algorithms.
\newblock In \emph{International conference on machine learning}, pp.\  3076--3085. PMLR, 2017.

\bibitem[Shi et~al.(2019)Shi, Blei, and Veitch]{shi2019adapting}
Shi, C., Blei, D., and Veitch, V.
\newblock Adapting neural networks for the estimation of treatment effects.
\newblock \emph{Advances in neural information processing systems}, 32, 2019.

\bibitem[Shi et~al.(2022)Shi, Xie, Wang, Ding, Li, and Zhang]{shi2022explainable}
Shi, S., Xie, Y., Wang, Z., Ding, B., Li, Y., and Zhang, M.
\newblock Explainable neural rule learning.
\newblock In \emph{Proceedings of the ACM Web Conference 2022}, pp.\  3031--3041, 2022.

\bibitem[Shrikumar et~al.(2017)Shrikumar, Greenside, and Kundaje]{shrikumar2017learning}
Shrikumar, A., Greenside, P., and Kundaje, A.
\newblock Learning important features through propagating activation differences.
\newblock In \emph{International conference on machine learning}, pp.\  3145--3153. PMLR, 2017.

\bibitem[Wager \& Athey(2018)Wager and Athey]{wager2018estimation}
Wager, S. and Athey, S.
\newblock Estimation and inference of heterogeneous treatment effects using random forests.
\newblock \emph{Journal of the American Statistical Association}, 113\penalty0 (523):\penalty0 1228--1242, 2018.

\bibitem[Wang et~al.(2021)Wang, Ren, Han, Ye, and Zhang]{wang2021towards}
Wang, J., Ren, Z., Han, B., Ye, J., and Zhang, C.
\newblock Towards understanding cooperative multi-agent q-learning with value factorization.
\newblock \emph{Advances in Neural Information Processing Systems}, 34:\penalty0 29142--29155, 2021.

\bibitem[Weinstein et~al.(2013)Weinstein, Collisson, Mills, Shaw, Ozenberger, Ellrott, Shmulevich, Sander, and Stuart]{weinstein2013cancer}
Weinstein, J.~N., Collisson, E.~A., Mills, G.~B., Shaw, K.~R., Ozenberger, B.~A., Ellrott, K., Shmulevich, I., Sander, C., and Stuart, J.~M.
\newblock The cancer genome atlas pan-cancer analysis project.
\newblock \emph{Nature genetics}, 45\penalty0 (10):\penalty0 1113--1120, 2013.

\bibitem[Xue et~al.(2023)Xue, Said, Xu, Liu, Shah, Yang, Payne, and Lu]{xue2023assisting}
Xue, B., Said, A.~S., Xu, Z., Liu, H., Shah, N., Yang, H., Payne, P., and Lu, C.
\newblock Assisting clinical decisions for scarcely available treatment via disentangled latent representation.
\newblock In \emph{Proceedings of the 29th ACM SIGKDD Conference on Knowledge Discovery and Data Mining}, pp.\  5360--5371, 2023.

\bibitem[Yang et~al.(2017)Yang, Yang, and Cohen]{yang2017differentiable}
Yang, F., Yang, Z., and Cohen, W.~W.
\newblock Differentiable learning of logical rules for knowledge base reasoning.
\newblock \emph{Advances in neural information processing systems}, 30, 2017.

\bibitem[Yang \& Song(2019)Yang and Song]{yang2019learn}
Yang, Y. and Song, L.
\newblock Learn to explain efficiently via neural logic inductive learning.
\newblock \emph{arXiv preprint arXiv:1910.02481}, 2019.

\bibitem[Yoon et~al.(2018)Yoon, Jordon, and Van Der~Schaar]{yoon2018ganite}
Yoon, J., Jordon, J., and Van Der~Schaar, M.
\newblock Ganite: Estimation of individualized treatment effects using generative adversarial nets.
\newblock In \emph{International conference on learning representations}, 2018.

\bibitem[You et~al.(2023)You, Qu, Gatti, Jain, and Wong]{you2023sum}
You, W., Qu, H., Gatti, M., Jain, B., and Wong, E.
\newblock Sum-of-parts models: Faithful attributions for groups of features.
\newblock \emph{arXiv preprint arXiv:2310.16316}, 2023.

\bibitem[Zhang et~al.(2017)Zhang, Chen, Tang, Stewart, and Sun]{leap2017}
Zhang, Y., Chen, R., Tang, J., Stewart, W.~F., and Sun, J.
\newblock Leap: Learning to prescribe effective and safe treatment combinations for multimorbidity.
\newblock In \emph{Proceedings of the 23rd ACM SIGKDD International Conference on Knowledge Discovery and Data Mining}, KDD '17, pp.\  1315–1324, New York, NY, USA, 2017. Association for Computing Machinery.
\newblock ISBN 9781450348874.
\newblock \doi{10.1145/3097983.3098109}.
\newblock URL \url{https://doi.org/10.1145/3097983.3098109}.

\bibitem[Zhang et~al.(2022)Zhang, Zhang, Lipton, Li, and Xing]{zhang2022exploring}
Zhang, Y.-F., Zhang, H., Lipton, Z.~C., Li, L.~E., and Xing, E.~P.
\newblock Exploring transformer backbones for heterogeneous treatment effect estimation.
\newblock \emph{arXiv preprint arXiv:2202.01336}, 2022.

\end{thebibliography}
\bibliographystyle{icml2024}

\newpage

\onecolumn

\addcontentsline{toc}{section}{Appendices}
\setcounter{section}{0}
\renewcommand{\thesection}{\Alph{section}}

\section{Datasets}\label{sec: dataset}
\textbf{\ihdp} is a semi-synthetic dataset composed of the observations from 747 infants from the Infant Health
and Development Program, which is used for the effect of home visits (treatment variable) by specialists on infants' cognitive scores (outcome) in the future. 

\textbf{\tcga}. We obtain the covariates of TCGA from a real data set, the Cancer Genomic Atlas \citep{bica2020estimating}. We then follow the data generation process of \citep{zhang2022exploring} to generate synthetic treatments, dosage values and outcomes.

\textbf{\ihdpc} is a variant of the \ihdp\ dataset, where we modify the treatment variable to become continuous, and follow \citep{nie2020vcnet} to generate the synthetic treatment and outcome values. 

\textbf{\news} is composed of 3000 randomly sampled news items from the NY Times corpus \citep{misc_bag_of_words_164}. Bag-of-Word features are used for treatment effect estimation and we follow prior studies \citep{bica2020estimating} to generate synthetic treatment and outcome values. 

\textbf{\eeec} consists of 33738 English sentences. Each sentence in this dataset is produced by following a template such as ``$<$Person$>$ made me feel $<$emotional state word$>$'' where $<$Person$>$ and $<$emotional state word$>$ are placeholders to be filled. To study the effect of race or gender on the mood state, placeholders such as $<$Person$>$ are replaced with race-related or gender-related nouns (say an African-American name for $<$Person$>$) while the placeholder $<$emotional state word$>$ is filled with one of the four mood states: \textit{Anger}, \textit{Sadness}, \textit{Fear} and \textit{Joy}. The replacement of those placeholders with specific nouns is guided by a pre-specified causal graph \citep{feder2021causalm}. Throughout this paper, we only consider the case in which gender is the treatment variable. 

\textbf{\uganda} is composed of around 1.3K satellite images collected from around 300 different sites from \uganda. In addition to the image data, some tabular features are also collected such as age and ethnicity. However, as reported by \cite{jerzak2022image}, such tabular features often fail to cover important information such as the neighborhood-level features and geographical contexts, which, are critical factors for determining whether anti-poverty intervention for a specific area is needed. 

Note that the generation of synthetic treatments and outcomes on \ihdpc, \news\ and \tcga\ dataset relies on some hyper-parameters to specify the number of treatments or the range of dosage. For our experiments, we used the default hyper-parameters provided by \citep{zhang2022exploring}.


\section{Additional notes on baseline methods}\label{sec: add_baseline}
TVAE and Ganite can only handle binary treatments without dose variables, which are thus not applicable to \tcga, \ihdpc, and \news\ datasets. VCNet is not suitable for continuous treatment variables, and hence is not evaluated on \ihdpc\ and \news\ datasets.

\section{Additional Technical Details}\label{sec: addition_tech}

\subsection{Conventional Assumptions for Treatment Effect Estimation}\label{sec:te_assumption}

\begin{assumption}
(Strong Ignorability)
$Y (T = t) \perp T | X$. In the binary treatment case, $Y (0), Y (1) \perp T | X.$
\end{assumption}

\begin{assumption}
\vspace{-0.4em}
(Positivity)
$0 < \pi(T|X) < 1, \forall X, \forall T$.
\end{assumption} 
\begin{assumption}
\vspace{-0.4em}
(Consistency) For the binary treatment setting, $Y = T Y(1) + (1 - T) Y(0)$. 
\end{assumption} 


\subsection{Pre-training phase}\label{sec: pre-training phase}
As mentioned in Section \ref{sec: overview}, the backbone model $\Theta_0$ can be initialized with a pre-training phase. Specifically, we perform pre-training by training a black-box model, such as TransTEE, that leverages $\Theta_0$ as the encoder. We utilize the same training set during the pre-training phase as the one used during the training phase of \ours.    

\subsection{Encoding Rules}\label{sec: encoding_rules}
To encode a literal, $l_{k}=A\ op\ c$, we perform one-hot encoding on feature A and operator $op$, which are concatenated with the normalized version of $c$ (i.e., all the values of $A$ should be rescaled to $[0,1]$) as the encoding for $l_k$. We then concatenate the encoding of all $l_k$ to compose the encoding of $L_{1:K}$. 

\subsection{Generalizing to Disjunctive Rules}\label{sec: gen_disjunctive}
The above process of building a conjunctive rule can be viewed as generating {\it the most probable} conjunctive rules among all the possible combinations of $A$, ${op}$ and $c$. This can be generalized to building a rule with multiple disjunctions, by generating the $H$ most probable conjunctive rules instead, where $H$ represents the number of disjunctions specified by users. Specifically, for the model $\Theta_1$, we simply select the $H$ most probable features from its model output while for the model $\Theta_2$, we leverage beam search to choose the $H$ most probable $(A, c)$ pairs. 

\subsection{Generalizing to Categorical Outcome Variables}
To generalize \ours\ to handle categorical outcome variables, by following \citep{feder2021causalm}, the treatment effect is defined by the difference between the probability distributions of all categorical variables. Additionally, to estimate outcomes within a subgroup of similar samples, we simply compute the frequency of each outcome as the estimation. 

\subsection{Generalizing to Other Categories of Treatment Variables}\label{sec: general_treatment_var}
We first discuss general settings for various treatment variables and then discuss how to estimate the treatment effect for each of them.

The settings for all treatment variables that our methods can deal with:
\begin{enumerate}[leftmargin=*,nolistsep, nosep]
    \item Tabular data with a binary treatment variable $T$ and no dose variables. In this setting, $T=1$ represents treated unit while $T=0$ represents untreated unit, and the ITE is defined as the difference of outcomes under the treatment and under the control, respectively (i.e., $\text{ITE}(x) = y_1(x)-y_0(x)$, where $y_1(x)$ and $y_0(x)$ represents the potential outcome with and without receiving treatment for a sample $x$).  
    The average treatment effect, $\text{ATE}$, is the sample average of $\text{ITE}$ across all samples (i.e., $\text{ATE} = \mathbb{E}[\text{ITE}]$).
    
    \item Tabular data with a continuous treatment variable $T$. Following \cite{zhang2022exploring}, the average dose-response function is defined as the treatment effect, i.e., $\mathbb{E} [Y|X, do(T=t)]$.
    \item Tabular data with a discrete treatment variable $T$ with one additional continuous dose variable $S$. Following \cite{zhang2022exploring}, the average treatment effect is defined as the average dose-response function: $\mathbb{E} [Y|X, do(T=t, S=s)]$.

\end{enumerate}

The treatment effect for each of the above settings is then estimated as follows:
\begin{enumerate}[leftmargin=*,nolistsep, nosep]
    \item With a binary treatment variable and no dose variable, 
    we can estimate the ATE of $R_x(\mathcal{D})$ via arbitrary treatment effect estimation methods, such as the classical statistical matching algorithm \citep{kline2022psmpy}, or state-of-the-art neural network models. In this paper, we adopt the K-Nearest Neighbor Matching by default for estimating the ATE of $R_x(\mathcal{D})$:
$\text{ITE} = y_1(x) - y_0(x)$.
    We can also obtain the estimated outcome by averaging the outcome of samples from $R_x(\mathcal{D})$ with the same treatment as the sample $x$, i.e.:
    \begin{small}
    \begin{align}
        \begin{split}
            \widehat{y}(t) = \frac{1}{\sum \mathbb{I}({\othert{i}=t})}\sum\mathbb{I}({\othert{i}=t})\cdot \othery{i}
        \end{split}
    \end{align}    
    \end{small}
    \item With a continuous treatment variable $T$ but without dose variables, then as per \ref{sec: prelim}, the ITE is represented by the outcome conditioned on the observed treatment. One straightforward way to estimate it is to employ the average outcome of samples within $R_x(\mathcal{D})$ that receive similar treatments to $x$, which is also the estimated outcome for this sample:
    \begin{small}
        \begin{align}\label{eq: ite_cont_treatment}
            \begin{split}
                \hat{y} 
                = \frac{\sum\mathbb{I}[{(\otherx{i}, \othert{i}, \othery{i}) \in \text{top}_k(R_x(\mathcal{D}))}]\cdot\othery{i}}{\sum\mathbb{I}[{(\otherx{i}, \othert{i}, \othery{i}) \in \text{top}_k(R_x(\mathcal{D}))}]},
            \end{split}
        \end{align}
    \end{small}
    in which $\text{top}_k(R_x(\mathcal{D}))$ is constructed by finding the top-$k$ samples from $R_x(\mathcal{D})$ with the most similar treatments to $x$. But again, any existing treatment effect estimation methods for continuous treatment variables from the literature are applicable to estimate $\widehat{\text{ITE}}_x$.

    \item With a discrete treatment variable $T$ and one associated continuous dose variable $S$, ITE is estimated in a similar way to \eqref{eq: ite_cont_treatment}. Specifically, we estimate ATE over the subgroup of similar samples with the following formula:
    \begin{small}
        \begin{align}\label{eq: ite_treatment_dose}
            \begin{split}
                \hat{y}
                = \frac{\sum\mathbb{I}[{(\otherx{i}, \othert{i}, \others{i}, \othery{i}) \in \text{top}_k(R_x(\mathcal{D}))}]\cdot\othery{i}}{\sum\mathbb{I}[{(\otherx{i}, \othert{i}, \others{i}, \othery{i}) \in \text{top}_k(R_x(\mathcal{D}))}]}.
            \end{split}
        \end{align}
    \end{small}
    In the above formula, 
    $\text{top}_k(R_x(\mathcal{D}))$ is constructed by first selecting the samples with the same treatment as the sample $x$ and then only retaining the $k$ samples with the most similar dose values to $x$.
\end{enumerate}


\subsection{Deep Q-learning and Training Algorithm}
To facilitate Q-learning, we estimate the Q value 
with the output logits of the models given a state $(x, L_{1:k-1})$ and an action $l_k$, which is denoted by $Q(l_k, (x, L_{1:k-1}))$. Note that $l_k$ is generated collaboratively by using two models, $\Theta_1$ and $\Theta_2$, we therefore need to collect two sub-Q values from these two models, and 
then aggregate (say average) them as the overall Q value, which follows prior multi-agent Q-learning literature \citep{wang2021towards}. 
In the end, by following the classical DQL framework, we optimize the following objective function adapted from the Bellman equation \citep{dixit1990optimization}:
\begin{small}
    \begin{equation}
    L_{\Theta} = \mathbb{E}[Q(l_k, (x, L_{1:k-1})) - (\gamma\cdot \max_{l_{k+1}}Q(l_{k+1}, (x, L_{1:k})) + r_k) ]^2,
    \end{equation}
\end{small}
which is estimated over a sampled mini-batch of cached experience taking the form of $<(x,L_{1:k-1}), l_k, r_k, (x,L_{1:k})>$ during the experience replay process.
The training algorithm for rule learning is outlined in Algorithm \ref{alg:training_overview} below. 

\begin{algorithm}[H]
\begin{small}
\caption{The overview of Deep Q-Learning (DQL) algorithm for rule learning in \ours}
\label{alg:training_overview}
\textbf{Input}: {target model update: $t$, gamma: $\gamma$, batch size: $b$, target model parameters: $\Theta^{target}$, policy model parameters: $\Theta^{policy}$, experience replay cache: $ cache = <(x,L_{1:k-1}), l_k, r_k, (x,L_{1:k})>$ where $x$ is a covariate, $L_{1:k-1}$ is the set of literals at step $k-1$, $l_k$ is the literal synthesized at step $k$, $r_k$ is the reward at step $k$, and $L_{1:k}$ is $L_{1:k-1} \cup l_k$}\\
\textbf{Output}: {$None$}
\begin{algorithmic}[1]
    \STATE Initialize $\displaystyle \vw^{pred}$ and $\displaystyle \vw^{target}$ of length $b$
    \STATE Construct $batch$ by sampling $b$ entries from $cache$
    \FOR{$i, <(x^i,L_{1:k-1}^i), l_k^i, r_k^i, (x^i,L_{1:k}^i)>$ in $\emph{Enumerate}(batch)$}
        \STATE Use $\Theta_0^{policy}$ and a deterministic function to encode both $x^i$ and $L_{1:k-1}^i$, respectively, to get $E_{k-1}^i$;
        \STATE Forward pass $E_{k-1}^i$ through $\Theta_1^{policy}$ and select the index of the feature from $l_k^i$ to obtain $Q_f^i$;
        \STATE Append a one-hot encoding of the feature from $l_k^i$ to $E_{k-1}^i$ to get $E_{partial}^i$;
        \STATE forward pass $E_{partial}^i$ through $\Theta_2^{policy}$ and select the index of the constant from $l_k^i$ to get $Q_c^i$;
        \STATE Obtain $Q_{k-1}^i$ by averaging $Q_f^i$ and $Q_c^i$;
        \STATE Obtain $Q_k^i$ by forward passing $x^i$ and $L_{1:k}^i$ through $\Theta^{target}$ and averaging the maximum $Q$ values from $\Theta_1^{target}$ and $\Theta_2^{target}$;
        \STATE $\displaystyle \vw^{pred}_i \gets Q_{k-1}^i$; $\displaystyle \vw^{target}_i \gets \gamma Q_k^i * + r_k^i$;
    \ENDFOR
    \STATE Backpropogate and update $\Theta^{policy}$ using loss $\emph{MSE}(\displaystyle \vw^{pred}$,$\displaystyle \vw^{target})$
    \IF{$\emph{len}(cache) \% t == 0$}
    \STATE $\Theta^{target} \gets \Theta^{policy}$
    \ENDIF
\end{algorithmic}
\end{small}
\end{algorithm}

\subsection{Proof of Theorem \ref{thm: main}}

We first state some additional preliminary notations and settings for Q-learning.  We denote the Markov decision process (MDP) as a tuple $(\mathcal{S}_k, \mathcal{L}_k, P_k, r_k)$ where

\begin{itemize}
    \item $\mathcal{S}_k$ is the state space with a state $(x, L_{1:k})$;
    \item $\mathcal{L}_k$ is the action space with an action $l_k$;  
    \item $P_k$ represents the transition probability;
    \item $r_k$ represents the reward function. 
\end{itemize}

 Theorem \ref{thm: main} is a direct implication of Lemma \ref{lem: y} below. 

\begin{lemma} \label{lem: y}
Suppose we have input data $\{(x_i, t_i, s_i, y_i)\}_{i=1}^N$ where $x_i \in \mathbb{R}^m$ and discrete, $t_i \in \mathbb{R}, s_i \in \mathbb{R}$ and $y_i \in \mathbb{R}$, then the $\hat y$ obtained from DISCRET converges to zero generalization error with probability 1 (i.e. $(y - \hat y)^2 \rightarrow 0$ w.p. 1) for any fixed $K \leq m$ over the dataset with all discrete features under the data generating process $y = f (\mathcal{X}_K ) + c \cdot t + \epsilon$, where $\mathcal{X}_K \subseteq \{X_1, X_2, \cdots, X_m \}, c \in \mathbb{R}, t$ is the treatment assignment, and $\epsilon \sim \mathcal{N}(0, \sigma^2)$ for some $\sigma > 0$. 
\end{lemma}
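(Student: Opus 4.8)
The plan is to reduce the claim to a single question: does \ours's training procedure learn to select exactly the relevant feature set $\mathcal{X}_K$? Once we establish that the synthesized rule $R_x$ conditions precisely on the ground-truth features $\mathcal{X}_K$, the conclusion follows from a law-of-large-numbers argument, because the retrieved subgroup $R_x(\mathcal{D})$ then becomes homogeneous in the signal $f(\mathcal{X}_K)$. Concretely, I would split the argument into three steps: (i) a \emph{reward-optimality} step showing that matching on $\mathcal{X}_K$ maximizes the expected reward; (ii) a \emph{convergence} step showing that Deep Q-learning converges to this optimal feature-selection policy under the stated assumptions ($\Theta_0$ the identity map and $\Theta_1$ a one-layer network, so that the policy class is expressive enough to represent the optimal selector over discrete features); and (iii) an \emph{estimation-consistency} step that controls the residual noise inside the correct subgroup.

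For the reward-optimality step, fix a sample $x$ and consider the subgroup induced by matching on an arbitrary feature set $\mathcal{A}$ with $|\mathcal{A}| \le K$; because all features are discrete, this subgroup consists of all training samples agreeing with $x$ on $\mathcal{A}$. If $\mathcal{X}_K \subseteq \mathcal{A}$, then every member of the subgroup shares the value $f(\mathcal{X}_K(x))$, so the outcome model collapses to $y = f(\mathcal{X}_K(x)) + c\,t + \epsilon$ within the subgroup and the treatment-stratified average $\hat{y}(t)$ is unbiased for $f(\mathcal{X}_K(x)) + c\,t$. If instead $\mathcal{X}_K \not\subseteq \mathcal{A}$, then $f$ genuinely varies across the subgroup, so $\hat{y}(t)$ incurs a strictly positive bias equal to the conditional variance of $f(\mathcal{X}_K)$ given $\mathcal{A}$. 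Since the reward $e^{-\alpha(y-\hat{y})^2}$ is strictly decreasing in squared error, its population expectation is strictly larger for any $\mathcal{A} \supseteq \mathcal{X}_K$ than for any $\mathcal{A}$ missing a relevant feature; combined with the budget $|\mathcal{A}| \le K$, this forces the reward-maximizing selector to contain $\mathcal{X}_K$ (and to equal it when $|\mathcal{X}_K| = K$), which is all the later steps require.

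For the convergence and consistency steps, I would invoke the classical convergence guarantee for Q-learning stochastic approximation (each state--action pair visited infinitely often, suitable step sizes), which under the identity-$\Theta_0$ and one-layer-$\Theta_1$ assumptions reduces the policy to a finite table over discrete features; this yields almost-sure convergence of the learned policy to the reward-maximizing selector. Conditioned on this event, the subgroup $R_x(\mathcal{D})$ grows without bound as $N \to \infty$ (each discrete cell has positive probability and, by positivity, eventually contains both treated and untreated units), so the strong law of large numbers gives $\frac{1}{n_t}\sum_{i:\,t_i=t}\epsilon_i \to 0$ almost surely. Hence $\hat{y}(t) \to f(\mathcal{X}_K(x)) + c\,t$, so the generalization error $(y-\hat{y})^2 \to 0$ w.p. $1$ (interpreting $y$ as the conditional-mean signal), establishing Lemma \ref{lem: y}; and $\widehat{ITE}_x = \hat{y}(1) - \hat{y}(0) \to c = ITE_x$, which yields Theorem \ref{thm: main} as an immediate corollary.

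The main obstacle I anticipate is the reward-optimality step, specifically ruling out spurious minimizers of the \emph{empirical} error. With finite data a finer partition (matching on extra or wrong features) can overfit and achieve small training error merely by isolating $x$, which is why the budget constraint $K \le m$, the non-emptiness property (forcing both treatment arms to remain in the subgroup), and asymptotic discreteness (every cell's population grows) are all essential: together they ensure the empirical reward uniformly tracks its population counterpart and that the population maximizer is governed by bias rather than by small-sample variance. Making the exploration assumptions precise in the sequential, literal-by-literal rule-construction MDP $(\mathcal{S}_k, \mathcal{L}_k, P_k, r_k)$, and verifying that greedy per-step feature selection does not get trapped before assembling the full set $\mathcal{X}_K$, is the remaining technical care needed.
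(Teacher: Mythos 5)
Your proposal follows essentially the same route as the paper's proof: invoke the classical Q-learning convergence result (the paper's Theorem~\ref{thm: Q}) to get almost-sure convergence to the optimal policy, and then argue that under discreteness the optimal policy selects exactly the relevant features $\mathcal{X}_K$, so the resulting prediction drives the error to zero. Your write-up is in fact more careful than the paper's own two-line argument for the second half --- the explicit bias decomposition for subgroups missing a relevant feature, the law-of-large-numbers step handling $\epsilon$, and the observation that $(y-\hat y)^2 \rightarrow 0$ only makes sense with $y$ read as the conditional mean are all points the paper asserts without justification.
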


To prove Lemma \ref{lem: y}, we need to use results from \ref{thm: Q}. 

\begin{theorem} \label{thm: Q}
    Given a finte Markov decision process $(\mathcal{S}_k, \mathcal{L}_k, P_k, r_k)$, given by the update rule
\begin{equation} \label{equ: Bellman}
    \begin{split}
                Q(l_k, (x, L_{1:k})) &= Q(l_{k-1}, (x, L_{1:k-1})) + \alpha_{k-1} (l_{k-1}, (x, L_{1:k-1})) \times \\
        &\left(r_{k-1} + \gamma \max_{(x^*, L^*_{1:k-1}) \in \mathcal{S}_k \times \mathcal{L}_k}Q (l_{k-1}, (x^*, L^*_{1:k-1})) - Q (l_{k-1}, (x, L_{1:k-1})) \right) 
    \end{split}
\end{equation}
    converges with probability 1 to the optimal $Q$-function as long as 
    \begin{align*}
        \sum_k \alpha_k (l_k, (x, L_{1:k-1})) = \infty, \quad \sum_k \alpha_k^2 (l_k, (x, L_{1:k-1})) < \infty
    \end{align*}
    for all $(l_k, (x, L_{1:k-1})) \in \mathcal{S}_k \times \mathcal{L}_k$. 
\end{theorem}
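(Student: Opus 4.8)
The plan is to treat this as the classical Watkins--Dayan convergence result for tabular $Q$-learning and to prove it by viewing \eqref{equ: Bellman} as a stochastic approximation (Robbins--Monro) recursion to which a general convergence lemma for random iterative processes applies. First I would define the target $Q^*$ as the unique fixed point of the Bellman optimality operator $H$ on the finite space $\mathcal{S}_k \times \mathcal{L}_k$, where $(HQ)(s,a) = \mathbb{E}[r + \gamma \max_{a'} Q(s',a') \mid s,a]$; uniqueness is guaranteed because $H$ is a $\gamma$-contraction in the supremum norm whenever $\gamma < 1$. Writing $s$ for a state in $\mathcal{S}_k$, $a$ for an action in $\mathcal{L}_k$, and setting the error process $\Delta_k(s,a) = Q_k(s,a) - Q^*(s,a)$ where $Q_k$ denotes the estimate after the $k$-th update, I would rearrange \eqref{equ: Bellman} into the canonical form
\begin{equation*}
\Delta_k(s,a) = (1-\alpha_{k-1}(s,a))\,\Delta_{k-1}(s,a) + \alpha_{k-1}(s,a)\,F_{k-1}(s,a),
\end{equation*}
where $F_{k-1}(s,a) = r_{k-1} + \gamma \max_{a'} Q_{k-1}(s',a') - Q^*(s,a)$ collects the temporal-difference increment and $s'$ is the realized successor state.

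The key step is then to invoke the standard random-iteration lemma (as in Jaakkola, Jordan \& Singh, or Tsitsiklis): a process of the above form converges to $0$ with probability $1$ provided that (i) the stepsizes satisfy $\sum_k \alpha_k = \infty$ and $\sum_k \alpha_k^2 < \infty$ at every pair $(s,a)$, (ii) the conditional mean of $F$ contracts toward zero, $\|\mathbb{E}[F_{k-1} \mid \mathcal{F}_{k-1}]\|_\infty \le \gamma \|\Delta_{k-1}\|_\infty$, and (iii) the conditional variance is bounded, $\mathrm{Var}[F_{k-1}\mid\mathcal{F}_{k-1}] \le C(1 + \|\Delta_{k-1}\|_\infty^2)$. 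Condition (i) is precisely the hypothesis of the theorem, so only (ii) and (iii) require verification.

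To verify (ii), I would observe that $\mathbb{E}[F_{k-1}\mid\mathcal{F}_{k-1}] = (HQ_{k-1})(s,a) - Q^*(s,a) = (HQ_{k-1})(s,a) - (HQ^*)(s,a)$, so the contraction property of $H$ in the supremum norm immediately yields the bound $\gamma\|\Delta_{k-1}\|_\infty$; this is where $\gamma<1$ and the non-expansiveness of the $\max$ operator enter. Condition (iii) follows from the finiteness of the MDP and the boundedness of the rewards, which bound $F_{k-1}$ in terms of $\|\Delta_{k-1}\|_\infty + \|Q^*\|_\infty$. Assembling these, the lemma delivers $\Delta_k \to 0$, i.e.\ $Q_k \to Q^*$ with probability $1$.

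The main obstacle I anticipate is not the abstract contraction argument but adapting it faithfully to the episodic, finite-horizon structure of our rule-synthesis MDP: since there are only $K$ rounds and the state $(x, L_{1:k})$ grows by one literal per round, the chain is layered rather than recurrent, so the ``every state--action pair is visited infinitely often'' requirement underpinning (i) must be re-established across repeated episodes rather than within a single trajectory. I would therefore argue that the experience-replay sampling guarantees infinitely many updates at each reachable $(s,a)$, after which the stepsize conditions together with the contraction and variance bounds can be applied layer by layer to conclude convergence.
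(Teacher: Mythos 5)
Your proposal is correct and follows essentially the same route as the paper: both cast the update as a random iterative process $\Delta_k = (1-\alpha_{k-1})\Delta_{k-1} + \alpha_{k-1}F_{k-1}$ with $\Delta = Q - Q^*$, verify the conditional-mean contraction via the $\gamma$-contraction of the Bellman operator and the conditional-variance bound, and invoke the Jaakkola--Jordan--Singh convergence theorem. Your closing remark about re-establishing infinitely-many visits across episodes in the layered, finite-horizon MDP is a point the paper's proof glosses over, but it does not change the argument's structure.
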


\begin{proof}
    We start rewriting equation (\ref{equ: Bellman}) as
\begin{align*}
     Q(l_k, (x, L_{1:k})) &= \left(1 - \alpha_{k-1} (l_{k-1}, (x, L_{1:k-1})) \right) Q(l_{k-1}, (x, L_{1:k-1})) +  \alpha_{k-1} (l_{k-1}, (x, L_{1:k-1})) \times \\
        &\left(r_{k-1} + \gamma \max_{(x^*, L^*_{1:k-1}) \in \mathcal{S}_k \times \mathcal{L}_k}Q (l_{k-1}, (x^*, L^*_{1:k-1})) \right) 
\end{align*}
Denote the optimal $Q$ function be $Q^*(l_k, (x, L_{1:k})) $, subtracting equation above from both sides the quantity $Q^*(l_k, (x, L_{1:k})) $ and letting 
\begin{align*}
    \Delta_k (l_k, (x, L_{1:k})  ) = Q(l_k, (x, L_{1:k})) - Q^*(l_k, (x, L_{1:k})) 
\end{align*}
yields 
    \begin{align*}
            \Delta_k (l_k, (x, L_{1:k})  ) &= \left(1 - \alpha_{k-1} (l_{k-1}, (x, L_{1:k-1})) \right) \Delta_k (l_k, (x, L_{1:k})  ) \\
            &+ \alpha_{k-1} (l_{k-1}, (x, L_{1:k-1}))\left(r_k + \gamma  \max_{(x^*, L^*_{1:k-1}) \in \mathcal{S}_k \times \mathcal{L}_k}Q (l_{k-1}, (x^*, L^*_{1:k-1})) -  Q^*(l_k, (x, L_{1:k}))  \right).
    \end{align*}
If we write
\begin{align*}
    F_k (l_k, (x, L_{1:k})   ) = r_k((x, L_{1:k}), l_k, \mathcal{S}(x, L_{1:k}) ) + \gamma \max_{(x^*, L^*_{1:k-1}) \in \mathcal{S}_k \times \mathcal{L}_k}Q (l_{k-1}, (x^*, L^*_{1:k-1})) -  Q^*(l_k, (x, L_{1:k}))
\end{align*}
where $ \mathcal{S}(x, L_{1:k})$ is a random sample state obtained from the Markov chain $(\mathcal{S}_k, P_k)$, we have
\begin{align*}
    & \qquad \mathbb{E}[F_k (l_k, (x, L_{1:k})   ) | \mathcal{F}_k ] \\
    &= \sum_{b \in \mathcal{S}_k } P_k ((l_k, (x, L_{1:k}), b ) [r_k ((l_k, (x, L_{1:k}), l_{k} ) + \gamma  \max_{(x^*, L^*_{1:k-1}) \in \mathcal{S}_k \times \mathcal{L}_k}Q (l_{k-1}, (x^*, L^*_{1:k-1})) -  Q^*(l_k, (x, L_{1:k})) ] \\
    & = (\mathbf{H} Q) (x, L_{1:k})  - Q^*(l_k, (x, L_{1:k})).
\end{align*}
Using the fact that $Q^* =  (\mathbf{H} Q) (x, L_{1:k})$,
\begin{align*}
     \mathbb{E}[F_k (l_k, (x, L_{1:k})   ) | \mathcal{F}_k ] =  (\mathbf{H} Q) (x, L_{1:k} -  (\mathbf{H} Q^*) (x, L_{1:k} \leq \gamma \|Q - Q^*  \| = \gamma \| \Delta_k \|_\infty.
\end{align*}
We could also verify that
\begin{align*}
    Var [F_k (l_k, (x, L_{1:k})   ) | \mathcal{F}_k] \leq C (1 + \|\Delta_k \|_W^2)
\end{align*}
for some constant $C$. Then by the theorem below, $\Delta_k$ converges to zero with probability 1. Hence, $Q$ converges to $Q^*$ with probability 1.
\end{proof}

\begin{theorem}[\citet{jaakkola1993convergence}] 
    The random process $\{\Delta_t \}$ taking values in $\mathbb{R}^n$ and defined as 
    \begin{align*}
        \Delta_{t+1} (x) = (1 - \alpha_t (x)) \Delta_t (x) + \alpha_t (x) F_t (x)
    \end{align*}
    converges to zero with probability 1 under the following assumptions:
    \begin{itemize}
        \item $0 \leq \alpha_t \leq 1, \sum_{t} \alpha_t (x) = \infty$ and $\sum_t \alpha_t^2(x) < \infty$;
        \item $\|\mathbb{E} [F_t (x) | \mathcal{F}_t] \|_W \leq \gamma \|\Delta_t \|_W,$ with $\gamma < 1$;
        \item $Var(\mathcal{F}_t(x)|\mathcal{F}_t) \leq C(1 + \|\Delta_t \|_W^2),$ for $C > 0$.
    \end{itemize}
\end{theorem}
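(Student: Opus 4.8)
The plan is to prove this classical stochastic-approximation result by separating the deterministic contraction from the martingale noise, and then handling the weighted max-norm contraction through a ``boxing-in'' induction on shrinking bounds. First I would decompose $F_t(x) = \bar F_t(x) + w_t(x)$, where $\bar F_t(x) := \mathbb{E}[F_t(x) \mid \mathcal{F}_t]$ and $w_t(x) := F_t(x) - \bar F_t(x)$ is a martingale difference, so $\mathbb{E}[w_t(x) \mid \mathcal{F}_t] = 0$. Assumption (2) supplies the pointwise contraction bound $|\bar F_t(x)| \le \gamma \|\Delta_t\|_W$, while assumption (3) yields $\mathbb{E}[w_t(x)^2 \mid \mathcal{F}_t] \le C(1 + \|\Delta_t\|_W^2)$. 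Substituting into the recursion splits the update into a contractive drift term $\alpha_t \bar F_t$ and a zero-mean noise term $\alpha_t w_t$.

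Second, I would isolate the noise by introducing the auxiliary recursion $W_{t+1}(x) = (1 - \alpha_t(x)) W_t(x) + \alpha_t(x) w_t(x)$ and show $W_t(x) \to 0$ almost surely, conditionally on a uniform bound $\|\Delta_t\|_W \le B$. This is where the stepsize conditions do their work: applying the Robbins--Siegmund almost-supermartingale theorem to $W_t(x)^2$, the injected conditional variance is at most $C(1+B^2)\alpha_t^2(x)$, which is summable since $\sum_t \alpha_t^2 < \infty$, while the $(1-\alpha_t)^2$ damping together with $\sum_t \alpha_t = \infty$ drains any accumulated value, forcing $W_t \to 0$.

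Third ---  and this is the crux --- I would run a scaling induction to extract the max-norm contraction. The difficulty is that the drift on coordinate $x$ is governed by $\gamma\|\Delta_t\|_W$, the \emph{global} max-norm, not by $|\Delta_t(x)|$, so one cannot simply view $\|\Delta_t\|_W^2$ as a per-coordinate supermartingale and close the argument in one shot. Instead, assuming inductively that $\limsup_t \|\Delta_t\|_W \le B$ almost surely, I would compare each coordinate against the deterministic bound recursion $Z_{t+1}(x) = (1-\alpha_t(x)) Z_t(x) + \alpha_t(x)\gamma B$, whose solution converges to $\gamma B$ precisely because $\sum_t \alpha_t = \infty$. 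Combining this with $W_t \to 0$ from the previous step gives $\limsup_t \|\Delta_t\|_W \le \gamma B$; iterating the contraction $B \mapsto \gamma B$ yields $\limsup_t \|\Delta_t\|_W \le \gamma^n B_0 \to 0$, hence $\Delta_t \to 0$ with probability $1$.

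The main obstacle is exactly this last step. Beyond the max-norm versus coordinatewise mismatch just described, the induction cannot even begin without first establishing almost-sure boundedness (stability) of $\|\Delta_t\|_W$, i.e.\ a finite initial bound $B_0$; this preliminary stability estimate, together with the need to control the noise process $W_t$ uniformly over all coordinates while the bound shrinks, is where the genuine technical effort lies. The stepsize and variance conditions (1) and (3) are precisely what make both the stability estimate and the noise-draining step go through, and the contraction condition (2) with $\gamma < 1$ is what powers the geometric shrinkage $\gamma^n B_0 \to 0$.
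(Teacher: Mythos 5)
The paper does not actually prove this statement: its ``proof'' consists of the single line ``See \citet{jaakkola1993convergence} for the proof,'' so there is no argument in the paper to compare against. Your sketch is a correct and faithful outline of the argument in that cited reference (and in the Bertsekas--Tsitsiklis treatment of the same result): the decomposition $F_t = \bar F_t + w_t$ into conditional-mean drift and martingale noise, the auxiliary recursion $W_{t+1} = (1-\alpha_t)W_t + \alpha_t w_t$ killed by the summable-squares condition, the deterministic comparison process $Z_{t+1} = (1-\alpha_t)Z_t + \alpha_t \gamma B$ driven to $\gamma B$ by $\sum_t \alpha_t = \infty$, and the iterated contraction of the bound are exactly the standard proof's skeleton. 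You also correctly flag the two places where the real work lives: the preliminary almost-sure boundedness (stability) of $\|\Delta_t\|_W$, which in the original paper is obtained by a rescaling argument exploiting the at-most-quadratic variance growth, and the coordinatewise-versus-global-norm mismatch that forces the boxing-in induction. One small technicality worth making explicit if you were to write this out in full: the comparison step only yields $\limsup_t \|\Delta_t\|_W \le \gamma B + \epsilon$ for arbitrary $\epsilon > 0$, so the induction should shrink the bound by a factor $\gamma' \in (\gamma, 1)$ rather than by $\gamma$ itself, and the noise lemma must be restarted from a (random) time at which the new bound holds; since $(\gamma')^n B_0 \to 0$ all the same, this does not affect the conclusion.
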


\begin{proof}
    See \citet{jaakkola1993convergence} for the proof. 
\end{proof}

\begin{proof}[Proof of Lemma \ref{lem: y}]
Using Theorem \ref{thm: Q}, we can see that $Q$ obtained from DISCRET converges to optimal $Q^*$. As a result, $hat y$ obtained from DISCRET converges to optimal $y^*$. We left to prove that $y^*$ leads to a zero mean square error (i.e., $\|y - y^* \|_2^2$).  We can prove this using the fact that all features are discrete. Since all features are discrete and the optimal feature being selected in each step leads to a zero mean square error and other features lead to non-zero mean square error, it turns out that $y^*$ obtained from DISCRET leads to a zero mean square error.
\end{proof}

\subsection{Additional Reward Function Optimizations}\label{sec: reward_func_additional}

We further present some strategies to optimize the design of the cumulative reward function defined in \eqref{eq: reward_function}, which includes incorporating estimated propensity scores into this formula and automatically fine-tuning its hyper-parameters.

\paragraph{Regularization by estimating propensity scores} Similar to prior studies on ITE estimation \citep{shi2019adapting, zhang2022exploring}, we regularize the reward function $r_k$ by integrating the estimated propensity score, $\widehat{\pi}(T=t| X=x)$. Specifically, for discrete treatment variables, we re-weight \eqref{eq: reward_function_2} with the propensity score as a regularized reward function, i.e.:

\vspace{-2em}
\begin{small}
    \begin{align}\label{eq: reward_function_2}
        \begin{split}
        V_{1:K}^{reg} & =[e^{-\alpha(y - \widehat{y}_{1:K})^2} + \beta \cdot \widehat{\pi_{1:k}}(T=t| X=x)]\\
        & \cdot \mathbb{I}(L_{1:K}(\mathcal{D})\text{ is non-empty}),  
        \end{split}
    \end{align} \par
\end{small}
\vspace{-1.1em}

\paragraph{Automatic hyper-parameter fine-tuning}

We further studied how to 
automatically tune the hyper-parameter $\alpha$ and $\beta$ in \eqref{eq: reward_function_2}. For $\alpha$, 
at each training epoch, we identify the training sample producing the median of $(y-\widehat{y}_{1:K})^2$ among the whole training set and then ensure that for this sample, \eqref{eq: reward_function} is 0.5 through adjusting $\alpha$. This can guarantee that for those training samples with the smallest or largest outcome errors, \eqref{eq: reward_function} approaches 1 or 0 respectively.

We also designed an annealing strategy to dynamically adjust $\beta$ by setting it as 1 during the initial training phase to focus more on  
treatment predictions, and switching it to 0 
so that reducing outcome error is prioritized in the subsequent training phase.

\section{Addendum on Performance Metrics}\label{sec: metrics}

\subsection{Faithfulness Metrics}
We evaluate the faithfulness of explanations with two metrics, i.e., consistency and sufficiency from \citep{dasgupta2022framework}. For a single sample $x$ with local explanation $e_x$, the consistency is defined as the probability of getting the same model predictions for the set of samples producing the same explanations (denoted by $C_x$) as $x$ while the sufficiency is defined in the same way, except that it depends on the set of samples satisfying $e_x$ (denoted by $S_x$) rather than generating explanation $e_x$. These two metrics could be formalized with the following formulas:
\begin{align*}
    & \text{Consistency}(x) = Pr_{x' \in_{\mu}C_x} (\hat{y}(x) == \hat{y}(x')) \\
    & \text{Sufficiency}(x) = Pr_{x' \in_{\mu}S_x} (\hat{y}(x) == \hat{y}(x'))
\end{align*}

in which $\mu$ represents the probability distribution of $C_x$ and $S_x$. To evaluate explanations with these two metrics, \citep{dasgupta2022framework} proposed an unbiased estimator for $\text{Consistency}(x)$ and $\text{Sufficiency}(x)$, i.e.,:
\begin{align*}
    & \widehat{\text{Consistency}}(x) = \frac{1}{N} \sum_{i=1}^N \mathbb{I}(C_{x} > 1)\cdot\frac{C_{x, \hat{y}(x)}-1}{C_{x}-1} \\
    & \widehat{\text{Sufficiency}}(x) = \frac{1}{N} \sum_{i=1}^N \mathbb{I}(S_{x} > 1)\cdot\frac{S_{x, \hat{y}(x)}-1}{S_{x}-1}
\end{align*}

in which $C_{x, \hat{y}(x)}$ represents the set of samples sharing the same explanation and the same model predictions as the sample $x$ while $S_{x, \hat{y}(x)}$ represents the set of samples that satisfy the explanation produced by $x$ and share the same explanation as $x$. As the above formula suggests, both the consistency and sufficiency scores vary between 0 and 1.

But note that for typical ITE settings, the model output is continuous rather than discrete numbers. Therefore, we discretize the range of model output into evenly distributed buckets, and the model outputs that fall into the same buckets are regarded as having the same model predictions. As \citep{dasgupta2022framework} mentions, the sufficiency metric is a reasonable metric for evaluating rule-based explanations since it requires retrieving other samples with explanations. So we only report sufficiency metrics for methods that can produce rule-based explanations in Table \ref{tab: tabular_sufficiency}.

\subsection{Additional Notes for the \eeec\ Dataset} Note that for \eeec\ dataset, $\epsilon_{ATE}$ is used for performance evaluation but the ground-truth ITE is not observed, which is approximated by the difference of the predicted outcomes between factual samples and its ground-truth counterfactual alternative \citep{feder2021causalm}.

\subsection{$AMSE$ for Continuous Treatment Variable or Dose Variable}
To evaluate the performance of settings with continuous treatment variables or continuous dose variables, we follow \citep{zhang2022exploring} to leverage $AMSE$ as the evaluation metrics, which is formalized as follows:
\begin{align*}
AMSE = \begin{cases}
    \frac{1}{N}\sum_{i=1}^N \int_{t}[\hat{y}(x_i,t) - y(x_i,t)]\pi(t)dt & \text{continuous treatment variable} \\
    \frac{1}{N T}\sum_{i=1}^N \sum_{t=1}^T \int_{s}[\hat{y}(x_i,t) - y(x_i,t)]\pi(t)dt & \text{continuous dose variable},
\end{cases}    
\end{align*}

in which we compute the difference between the estimated outcome $\hat{y}$ and the observed outcome $y$ conditioned on every treatment $t$, and average this over the entire treatment space and all samples for evaluations. Due to the large space of exploring all possible continuous treatments $t$ or continuous dose values $s$, we collect sampled treatment or sampled dose rather than enumerate all $s$ and $t$ for the evaluations of $AMSE$.

\section{Additional Experimental Results}\label{sec: addition_res}

\subsection{Performance of Self-interpretable Models with Varying Complexity}\label{sec: varied_complexity}

On evaluating the performance of self-interpretable models when trained with a high depth, i.e number of conjunctive clauses ($K = 100$, as opposed to low-depth $K = 6$, see Table \ref{tab: self-interpret-high-depth}), we see that \ours\ ($K = 6$) outperforms these models despite having lower depth, and thus better interpretability. 

\begin{table*}[h]
\small
\centering
\setlength{\tabcolsep}{4pt}
\begin{tabular}{c|c|c|cc|cc|c}\toprule
\textbf{Modality} $\rightarrow$ & & & \multicolumn{5}{c}{\textbf{Tabular}} \\ \midrule
\textbf{Dataset} $\rightarrow$  & & & \multicolumn{2}{c|}{\ihdp} & \multicolumn{2}{c|}{\tcga} & \multicolumn{1}{c}{\ihdpc} \\ \midrule
\makecell{\textbf{Method} $\downarrow$} & 
\makecell{Trees} & 
\makecell{Depth} & 
\makecell{$\epsilon_{\text{ATE}}$\\(In-sample)} & \makecell{$\epsilon_{\text{ATE}}$ \\ (Out-of-sample)} &\makecell{$\epsilon_{\text{ATE}}$\\(In-sample)} &\makecell{$\epsilon_{\text{ATE}}$\\ (Out-of-sample)} & AMSE \\ \hline  

\multirow{2}{*}{Decision Tree} & 
  - & 6 & 0.693$\pm$0.028&0.613$\pm$0.045 & 0.200$\pm$0.012&0.202$\pm$0.012 &21.773$\pm$0.190 \\ &
%
 - & 100 & 0.638$\pm$0.031 & 0.549$\pm$0.052 & 0.441$\pm$0.004 & 0.445$\pm$0.004& 23.382$\pm$0.342 \\ 
\hline

\multirow{3}{*}{Random Forest} & 
  1 & 6 & 0.801$\pm$0.039 &0.666$\pm$0.055 & 19.214$\pm$0.163 &19.195$\pm$0.163 &21.576$\pm$0.185 \\ &
%
  1 & 100 & 0.734$\pm$0.041	& 0.653$\pm$0.056	 & 0.536$\pm$0.011	 & 0.538$\pm$0.012	 & 33.285$\pm$0.940 \\ &
  10 & 100 & 0.684$\pm$0.033	 & 0.676$\pm$0.034	 & 0.536$\pm$0.011	 & 0.538$\pm$0.012 & 38.299$\pm$0.841\\ 
\hline

NAM & 
 - & - & 0.260$\pm$0.031&0.250$\pm$0.032 &- & - &24.706$\pm$0.756 \\ 
\hline

\multirow{2}{*}{ENRL} & 
 1 & 6 & 4.104$\pm$1.060& 3.759$\pm$0.087&10.938$\pm$2.019 &10.942$\pm$2.019&24.720$\pm$0.985 \\ &
%
 1 & 100 & 4.094$\pm$0.032 & 4.099$\pm$0.107 & 10.938$\pm$2.019 & 10.942$\pm$2.019 & 24.900 $\pm$ 0.470 \\
\hline

\multirow{4}{*}{Causal Forest} & 
 1 & 6 & 0.144$\pm$0.019&0.275$\pm$0.035 & - & - & - \\ &
 1 & 100 & 0.151$\pm$0.019 & 0.278$\pm$0.033 & - & - & - \\ &
 100 & max & 0.124$\pm$0.015	& 0.230$\pm$0.031 & - & - & - \\ 
\hline

\multirow{2}{*}{BART} & 
 1 & - & 1.335$\pm$0.159&1.132$\pm$0.125&230.74$\pm$0.312&236.81$\pm$0.531&12.063$\pm$0.410 \\ &
 $N$ & - & 0.232$\pm$0.039 & 0.284$\pm$0.036 & - & - & 4.323$\pm$0.342 \\ 
\hline

\ours\ (ours) & - & 6 & \underline{0.089$\pm$0.040} & \underline{0.150$\pm$0.034} & \underline{0.076$\pm$0.019} & \underline{0.098$\pm$0.007} & \underline{0.801$\pm$0.165} \\ 
\makecell{TransTEE + \ours\\ (ours)*} & - & - & \textbf{0.082$\pm$0.009} &\textbf{0.120$\pm$0.014} & \textbf{0.058$\pm$0.010}&\textbf{0.055$\pm$0.009}&\textbf{0.102$\pm$0.007} \\
\bottomrule 
\hline
\end{tabular}
\caption{
ITE estimation errors (lower is better) at varying complexities for self-interpretable models. We \textbf{bold} the smallest estimation error for each dataset,  and \underline{underline} the second smallest one. Results in the first row for each method are duplicated from Table \ref{tab: quantative_res}. For BART, we set $N = 200$ for IHDP, and $N = 10$  for TCGA and IHDP-C due to large feature number of features in the latter. We show that \ours\ outperforms self-interpretable models and has simpler rules regardless of the model complexity used. Asterisk (*) indicates model is not self-interpretable. 
}
\label{tab: self-interpret-high-depth}
\vspace{-10px}
\end{table*}

It is worth noting that in both Table \ref{tab: quantative_res} and Table \ref{tab: self-interpret-high-depth}, the ITE errors for the \ihdpc\ dataset are pretty high for the baseline self-interpretable models and some black box models. This is because computing ITE for the \ihdpc\ dataset is a particularly hard problem, and necessitates the use of powerful models with high complexity. Indeed, \ihdpc\ dataset is a semi-synthetic dataset where values of the outcome variable are generated by a very complicated non-linear function \citep{zhang2022exploring}. Hence, tree-based models may not be able to capture such complicated relationships well. This is evidenced by high training errors and likely underfitting (training error was 48.17 for random forest v/s 0.58 for \ours). Even simple neural networks such as TARNet and DRNet, also significantly underperform as Table \ref{tab: quantative_res} suggests. Thus, ITE for \ihdpc\ can only be effectively encoded by powerful models, such as \ours\ and transformer-based architectures like TransTEE.

\subsection{Ablation Studies}\label{sec: ablation}
We further perform ablation studies to explore how different components of \ours\, such as the database and featurziation process (for NLP and image data), affect the ITE estimation performance. In what follows, we analyze the effect of the size of the database, different featurization steps, and different components of the reward function.

\textbf{Ablating the reward functions for \ours.}
Recall that in Section \ref{sec: training}, the reward function used for the training phase could be enhanced by adding propensity scores as one regularization and automatically tuning the hyper-parameters, $\alpha$ and $\beta$. We removed these two components from the reward function one after the other to investigate their effect on the ITE estimation performance. We perform this experiment on \uganda\ dataset and report the results in Table \ref{tab:ablation_reward}. As this table suggests, throwing away those two components from the reward function incurs higher outcome errors, thus justifying the necessity of including them for more accurate ITE estimation. 

\begin{table}[ht]
\small
    \centering
    \begin{tabular}{cc} \toprule
    & Outcome error \\\hline
        \ours & \textbf{1.662$\pm$0.136} \\ \midrule
        \ours\ without propensity score & 1.701$\pm$0.161\\ \midrule
        \ours\ without propensity score or auto-finetuning &1.742$\pm$0.151 \\ \bottomrule
    \end{tabular}
    \caption{Ablation studies on the reward function in \ours}
    \label{tab:ablation_reward}
\end{table}

\textbf{Ablating the database size.}
Since \ours\ estimates ITE through rule evaluations over a database, the size of this database can thus influence the estimation accuracy. We therefore vary the size of the IHDP dataset, i.e., the number of training samples, and compare \ours\ against baselines with varying database size. The full results are included in Table \ref{tab:ablation_dataset}.
As expected, error drops with increasing dataset size, and DISCRET outperforms baselines (particularly self-interpretable models) at smaller dataset sizes. The results suggest that with varied dataset sizes, TransTEE + DISCRET still outperforms all baseline methods while DISCRET performs better than all self-interpretable models. It is also worth noting that when the database size is reduced below certain level, e.g., smaller than 200, \ours\ can even outperform TransTEE. This implies that \ours\ could be more data-efficient than the state-of-the-art neural network models for ITE estimations, which is left for future work.

\begin{table}[ht]
\small
\centering
\begin{tabular}{c|c|c|c|c}
\toprule
\textbf{Method} & \textbf{100} & \textbf{200} & \textbf{400} & \textbf{Full (747)}  \\
\midrule
Decision Tree 	& 7.08$\pm$4.61 	& 1.04$\pm$0.30 & 1.19$\pm$0.52 & 	0.73$\pm$0.13 \\
Random Forest & 8.05$\pm$5.15 & 1.43$\pm$0.39 & 0.63$\pm$0.19 & 0.87$\pm$0.12 \\
NAM & 1.56$\pm$0.86 & 0.46$\pm$0.21 & 0.75$\pm$0.46 & 0.29$\pm$0.13 \\
ENRL & 4.40$\pm$0.33 & 4.05$\pm$0.04 & 4.40$\pm$0.33 & 4.05$\pm$0.05 \\
Causal forest & 0.87$\pm$0.47 & 0.88$\pm$0.24 & 0.31$\pm$0.14 & 0.18$\pm$0.06 \\
BART & 3.32$\pm$0.71 & 1.54$\pm$0.59 & 1.46$\pm$0.80 & 0.71$\pm$0.22 \\
DISCRET & 0.55$\pm$0.13 & 0.47$\pm$0.10 & 0.32$\pm$0.15 & 0.21$\pm$0.05 \\
Dragonnet & 0.94$\pm$0.47 & 0.46$\pm$0.09 & 1.06$\pm$0.61 & 0.23$\pm$0.08 \\
TVAE & 4.35$\pm$0.33 & 4.00$\pm$0.04 & 4.35$\pm$0.33 & 3.87$\pm$0.05 \\
TARNet & 0.33$\pm$0.12 & \underline{0.23$\pm$0.03} & \underline{0.16$\pm$0.03} & 0.17$\pm$0.03 \\
Ganite & 0.65$\pm$0.23 & 0.32$\pm$0.04 & 0.75$\pm$0.26 & 0.57$\pm$0.11 \\
DRNet & 0.37$\pm$0.11 & 0.43$\pm$0.23 & 0.19$\pm$0.06 & 0.17$\pm$0.03 \\
VCNet & 4.27$\pm$0.29 & 3.98$\pm$0.04 & 4.09$\pm$0.31 & 3.95$\pm$0.06 \\
TransTEE & \underline{0.33$\pm$0.05} & 0.35$\pm$0.15 & \underline{0.16$\pm$0.07} & \underline{0.15$\pm$0.03} \\
DISCRET & 0.55$\pm$0.13 & 0.47$\pm$0.10 & 0.32$\pm$0.15 & 0.21$\pm$0.05 \\
TransTEE + DISCRET & \textbf{0.24$\pm$0.05} & \textbf{0.21$\pm$0.06} & \textbf{0.09$\pm$0.03} & \textbf{0.08$\pm$0.03} \\
\bottomrule
\end{tabular}
\caption{ITE test errors (out-of-sample) with varied numbers of samples randomly selected from IHDP dataset}
\label{tab:ablation_dataset}
\end{table}


\subsection{Training Cost of DISCRET}\label{sec:training_cost}
We further plot Figure \ref{fig:training_cost} to visually keep track of how the ATE errors on test set are evolved throughout the training process. As this figure suggests although the best test performance occurs after 200 epochs (ATE error is around 0.12). However, the performance in the first few epochs is already near-optimal (ATE error is around 0.14). Therefore, despite the slow convergence in typical reinforcement learning training processes, our methods obtain reasonable treatment effect estimation performance without taking too many epochs. 

\begin{figure}[H]
    \centering
    \includegraphics[width=0.6\textwidth]{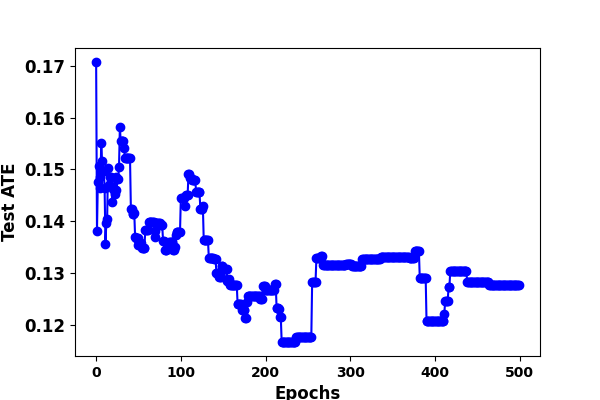}
    \caption{The curve of ATE errors on test split of \ihdp\ by \ours}
    \label{fig:training_cost}
\end{figure}


\subsection{Consistency and Sufficiency Scores} \label{sec:consistency_scores}
We provide the full results of the consistency and sufficiency scores below.

\begin{table}[ht]
\small
\centering
\begin{tabular}{ccccccc}\toprule
& \ihdp\ & \tcga\ &\ihdpc\ &  \news\ & \eeec\ & \uganda\ \\ \hline 
Model distillation &0.243$\pm$0.126 & 0.562$\pm$0.026 &0.127$\pm$0.008 &0.816$\pm$0.032 &0.004$\pm$0.001 & 0.198$\pm$0.008\\
Lore &0.000$\pm$0.000 &0.000$\pm$0.000& 0.000$\pm$0.000&0.000$\pm$0.000&0.000$\pm$0.000 & 0.000$\pm$0.001\\
Anchor &0.084$\pm$0.083 &0.001$\pm$0.000 &0.293$\pm$0.022 &0.000$\pm$0.000&0.000$\pm$0.000&0.066$\pm$0.015\\
Lime &0.182$\pm$0.129 &0.000$\pm$0.000&0.001$\pm$0.001 &0.000$\pm$0.000&0.000$\pm$0.000&0.000$\pm$0.000\\
Shapley & 0.009$\pm$0.017&0.005$\pm$0.002 &0.046$\pm$0.027 & 0.031$\pm$0.035&0.034$\pm$0.003 & 0.412$\pm$0.195\\ \hline
NAM &0.343$\pm$0.065 &0.120$\pm$0.002&0.045$\pm$0.006&0.493$\pm$0.110&-& 0.082$\pm$0.018\\
ENRL& 0.134$\pm$0.002&0.231$\pm$0.043&0.053$\pm$0.002&0.002$\pm$0.000&-&0.102$\pm$0.032\\\hline
\ours  &1.00$\pm$0.00 &1.00$\pm$0.00 &1.00$\pm$0.00 &0.982$\pm$0.00 &0.974$\pm$0.000 &0.789$\pm$0.011  \\\bottomrule 
\hline
\end{tabular}
\caption{Explanation consistency scores across datasets}\label{tab: consistency}
\end{table}

\begin{table}[ht]
\small
\centering
\begin{tabular}{ccccccc}\toprule
& \ihdp\ & \tcga\ &\ihdpc\ &  \news\ & \eeec\ & \uganda\ \\ \hline 

Model distillation &0.243$\pm$0.126 &0.529$\pm$0.001 & 0.029$\pm$0.003&\textbf{0.712$\pm$0.032} & 0.004$\pm$0.001 & 0.198±0.008\\
Lore & 0.320$\pm$0.084&0.034$\pm$0.013 & 0.030$\pm$0.009&0.142$\pm$0.012&0.002$\pm$0.001 &\textbf{0.265$\pm$0.008} \\
Anchor &0.084$\pm$0.083 &0.125$\pm$0.002 &0.332$\pm$0.016 &0.391$\pm$0.040&0.002$\pm$0.001&0.221$\pm$0.007\\ 
ENRL& 0.452$\pm$0.012&0.512$\pm$0.005&0.032$\pm$0.018&0.053$\pm$0.020&-&0.004$\pm$0.002\\\hline
\midrule
\ours  &\textbf{0.562$\pm$0.056}&\textbf{0.9999$\pm$0.000}&\textbf{0.588$\pm$0.019} &0.697$\pm$0.017&\textbf{0.926$\pm$0.067}& 0.104$\pm$0.011\\\bottomrule 
\hline
\end{tabular}
\caption{Explanation sufficiency scores across datasets (larger score indicates better sufficiency) }\label{tab: tabular_sufficiency}
\end{table}

\subsection{Results for News dataset}\label{sec: news_res}
Table \ref{tab: news_dataset} shows the results for the News dataset.



\begin{table}[!htb]
\small
\centering
    \medskip
    \begin{tabular}{ccc}
    \toprule
\multirow{2}{*} & \multicolumn{1}{c}{\news}
\\ \midrule
&AMSE \\ \hline 
\rowcolor{lightgray} Decision Tree &0.428$\pm$0.051 \\
\rowcolor{lightgray} Random Forest &0.452$\pm$0.048\\
\rowcolor{lightgray} NAM &0.653$\pm$0.026 \\
\rowcolor{lightgray} ENRL &0.638$\pm$0.019 \\
\rowcolor{lightgray} Causal Forest & 0.829$\pm$0.042\\
\rowcolor{lightgray} BART &0.619$\pm$0.040 \\
\hline
\rowcolor{lightgray} \ours (ours) & 0.385$\pm$0.083\\ 
\hline
Dragonnet &-\\ 
TVAE & -\\
TARNet & 0.073$\pm$0.020\\ 
Ganite &-\\ 
DRNet &0.065$\pm$0.021\\ 
VCNet &-\\ 
TransTEE &\underline{0.063$\pm$0.005}\\ \hline
TransTEE + NN & 0.383$\pm$0.041 \\
\ours + TransTEE (ours) &\textbf{0.043$\pm$0.005} & \\ 
\bottomrule 
\hline
\end{tabular}
\caption{ITE estimation errors for the News dataset}
\label{tab: news_dataset}
\end{table}

\subsection{Consistent Ground-truth Outcomes in the \uganda\ Dataset}\label{sec: uganda_consistent}
We observe that in \uganda\ dataset, the ground-truth outcome values are not evenly distributed, which is visually presented in Figure \ref{fig:uganda_outcome}. As this figure suggests, the outcome value of most samples is $-0.8816$ while other outcome values rarely occur. This thus suggests that our method is preferable in such datasets due to its consistent predictions across samples, which can explain the performance gains of \ours\ over baseline methods.
\begin{figure}[h]
    \centering
\includegraphics[width=\textwidth]{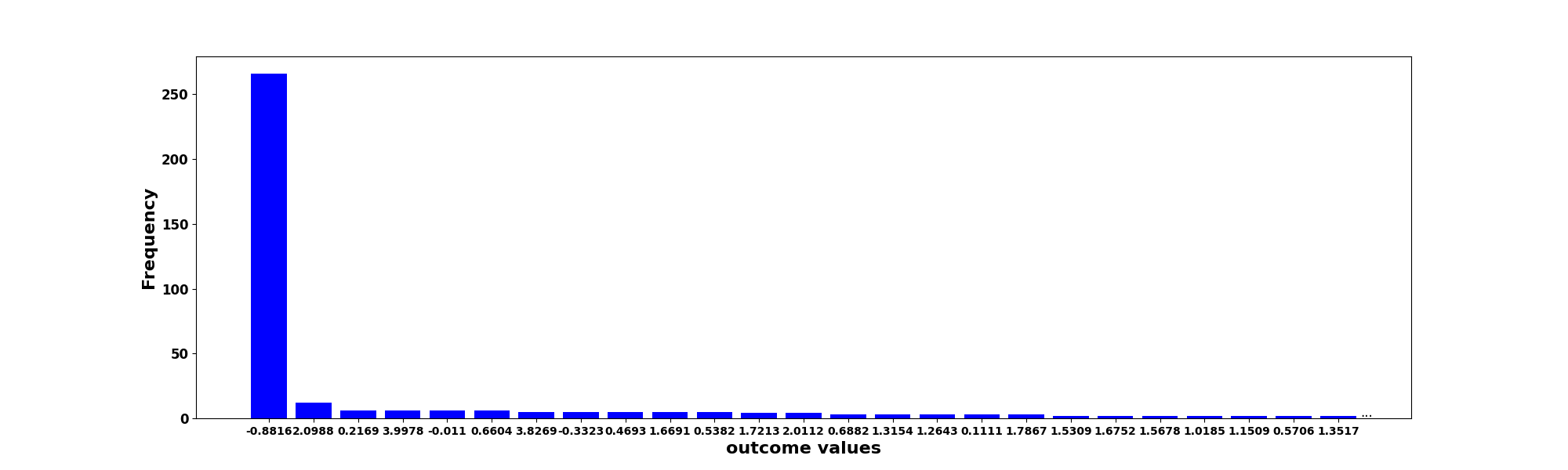}
    \caption{Frequency of the outcome values on \uganda\ dataset}
    \label{fig:uganda_outcome}
\end{figure}

\section{Additional Qualitative Analysis}\label{sec: exp_analysis}
As shown in Figure \ref{fig:iclr_example}, \ours\ generates one rule for one example image from \uganda\ dataset, which is defined on two concepts, i.e., one type of patches mainly containing reddish pink pixels that represent ``soil moisture content'' and the other type of patches mainly comprised of brown pixels indicating little soil. This rule thus represents the images from one type of location where there is plenty of soil moisture content that is suitable for agricultural development. Therefore, after the government grants are distributed in such areas, a more significant treatment effect is observed, i.e., 0.65. This is an indicator of significantly increasing working hours on the skilled jobs by the laborers in those areas. This is consistent to the conclusions from \citep{JJD-Heterogeneity, JJD-Confounding} which states that government grant support is more useful for areas with more soil moisture content.

\section{Feature Extraction from Image Data}\label{sec: feature_extraction}

To extract concepts from images of \uganda\ dataset, we segment each image as multiple superpixels \citep{achanta2012slic}, embed those superpixels with pretrained clip models \citep{radford2021learning}, and then perform K-means on these embeddings. Each of the resulting cluster centroids is regarded as one concept and we count the occurrence of each concept as one feature for an image. Specifically, we extract 20 concepts from the images of \uganda\ dataset, which are visually presented in Figure \ref{fig: image_concepts}.

\begin{figure}[!h]
    \centering
    \includegraphics[width=\textwidth]{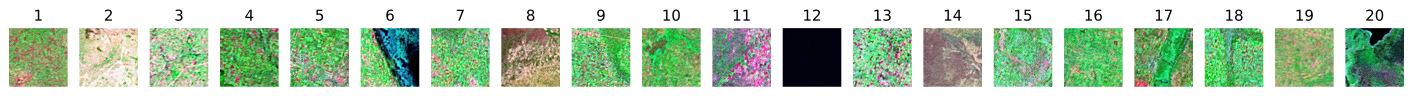}
    \caption{Extracted concepts from \uganda\ dataset}
    \label{fig: image_concepts}
\end{figure}

Various patterns of image patches are captured by Figure \ref{fig: image_concepts}. For example, patch 12 is almost all black, which represents the areas with water, say, river areas or lake areas. Also, as mentioned in Figure \ref{fig:iclr_example}, patch 11 with reddish pink pixels represents ``soil moisture content'', which is an important factor for determining whether to take interventions in the anti-poverty program conducted in Uganda. 

\end{document}